\icmltitlerunning{Towards Certifying $\ell_\infty$ Robustness using Neural Networks with $\ell_\infty$-dist Neurons}
\newcommand{\rbr}[1]{\left(#1\right)}
\newcommand{\sbr}[1]{\left[#1\right]}
\newcommand{\cbr}[1]{\left\{#1\right\}}
\def\eqref#1{equation~\ref{#1}}
\def\1{\bm{1}}
\def\eps{{\epsilon}}
\def\vb{{\bm{b}}}
\def\ve{{\bm{e}}}
\def\vg{{\bm{g}}}
\def\vh{{\bm{h}}}
\def\vm{{\bm{m}}}
\def\vr{{\bm{r}}}
\def\vw{{\bm{w}}}
\def\vx{{\bm{x}}}
\def\vz{{\bm{z}}}
\DeclareMathAlphabet{\mathsfit}{\encodingdefault}{\sfdefault}{m}{sl}
\SetMathAlphabet{\mathsfit}{bold}{\encodingdefault}{\sfdefault}{bx}{n}
\def\gT{{\mathcal{T}}}
\def\gX{{\mathcal{X}}}
\def\gY{{\mathcal{Y}}}
\def\sF{{\mathbb{F}}}
\def\sK{{\mathbb{K}}}
\def\sS{{\mathbb{S}}}
\newcommand{\R}{\mathbb{R}}
\DeclareMathOperator*{\argmax}{arg\,max}
\DeclareMathOperator{\sign}{sign}
\newtheorem{theorem}{Theorem}[section]
\newtheorem{lemma}[theorem]{Lemma}
\newtheorem{definition}[theorem]{Definition}
\newtheorem{proposition}[theorem]{Proposition}
\newtheorem{remark}[theorem]{Remark}
\newtheorem{fact}[theorem]{Fact}
\newcommand{\cmark}{\ding{51}}%
\newcommand{\xmark}{\ding{55}}%
\begin{document}

\twocolumn[
\icmltitle{Towards Certifying $\ell_\infty$ Robustness\\ using Neural Networks with $\ell_\infty$-dist Neurons}

% It is OKAY to include author information, even for blind
% submissions: the style file will automatically remove it for you
% unless you've provided the [accepted] option to the icml2021
% package.

% List of affiliations: The first argument should be a (short)
% identifier you will use later to specify author affiliations
% Academic affiliations should list Department, University, City, Region, Country
% Industry affiliations should list Company, City, Region, Country

% You can specify symbols, otherwise they are numbered in order.
% Ideally, you should not use this facility. Affiliations will be numbered
% in order of appearance and this is the preferred way.
\icmlsetsymbol{equal}{*}

\begin{icmlauthorlist}
\icmlauthor{Bohang Zhang}{pku}
\icmlauthor{Tianle Cai}{princeton,tianle}
\icmlauthor{Zhou Lu}{zhoulu}
\icmlauthor{Di He}{ms}
\icmlauthor{Liwei Wang}{pku,pku2}
\end{icmlauthorlist}

% \zl{discuss the order later? Z didn't contribute to empirical(main contribution), swap Z and T since D insisted 4th?}

\icmlaffiliation{pku}{Key Laboratory of Machine Perception,
MOE, School of EECS, Peking University}
% \icmlaffiliation{pazhou}{Pazhou Lab, Guangzhou, 510330, China}
\icmlaffiliation{pku2}{Center for Data Science, Peking University}
\icmlaffiliation{zhoulu}{Department of Computer Science, Princeton University}
\icmlaffiliation{princeton}{Department of Electrical and Computer Engineering, Princeton University}
\icmlaffiliation{tianle}{Zhongguancun Haihua Institute for Frontier Information Technology}
\icmlaffiliation{ms}{Microsoft Research}

\icmlcorrespondingauthor{Liwei Wang}{wanglw@cis.pku.edu.cn}

% You may provide any keywords that you
% find helpful for describing your paper; these are used to populate
% the "keywords" metadata in the PDF but will not be shown in the document
\icmlkeywords{Adversarial Robustness, Certified Robustness, Lipschitz Network}
\vskip 0.3in
]

% this must go after the closing bracket ] following \twocolumn[ ...

% This command actually creates the footnote in the first column
% listing the affiliations and the copyright notice.
% The command takes one argument, which is text to display at the start of the footnote.
% The \icmlEqualContribution command is standard text for equal contribution.
% Remove it (just {}) if you do not need this facility.

\printAffiliationsAndNotice{}  % leave blank if no need to mention equal contribution
% \printAffiliationsAndNotice{\icmlEqualContribution} % otherwise use the standard text.

\begin{abstract}
It is well-known that standard neural networks, even with a high classification accuracy, are vulnerable to small $\ell_\infty$-norm bounded adversarial perturbations. Although many attempts have been made, most previous works either can only provide empirical verification of the defense to a particular attack method, or can only develop a certified guarantee of the model robustness in limited scenarios. In this paper, we seek for a new approach to develop a theoretically principled neural network that inherently resists $\ell_\infty$ perturbations. In particular, we design a novel neuron that uses $\ell_\infty$-distance as its basic operation (which we call $\ell_\infty$-dist neuron), and show that any neural network constructed with $\ell_\infty$-dist neurons (called $\ell_{\infty}$-dist net) is naturally a 1-Lipschitz function with respect to $\ell_\infty$-norm. This directly provides a rigorous guarantee of the certified robustness based on the margin of prediction outputs. We then prove that such networks have enough expressive power to approximate any 1-Lipschitz function with robust generalization guarantee. We further provide a holistic training strategy that can greatly alleviate optimization difficulties. Experimental results show that using $\ell_{\infty}$-dist nets as basic building blocks, we consistently achieve state-of-the-art performance on commonly used datasets: 93.09\% certified accuracy on MNIST ($\epsilon=0.3$), 35.42\% on CIFAR-10 ($\epsilon=8/255$) and 16.31\% on TinyImageNet ($\epsilon=1/255$).
\end{abstract}

\section{Introduction}

Modern neural networks are usually sensitive to small, adversarially chosen perturbations to the inputs \citep{DBLP:journals/corr/SzegedyZSBEGF13, biggio2013evasion}. Given an image $\vx$ that is correctly classified by a neural network, a malicious attacker may find a small adversarial perturbation $\boldsymbol{\delta}$ such that the perturbed image $\vx+\boldsymbol{\delta}$, though visually indistinguishable from the original image, is assigned to a wrong class with high confidence by the network. Such vulnerability creates security concerns in many real-world applications. 

Developing a model that can resist small $\ell_\infty$ perturbations has been extensively studied in the literature. Adversarial training methods \citep{DBLP:journals/corr/SzegedyZSBEGF13,goodfellow2014, madry2017towards, pmlr-v97-zhang19p,ding2020mma} first generate on-the-fly adversarial examples of the inputs, then update model parameters using these perturbed samples together with the original labels. While such approaches can achieve decent empirical robustness, the evaluation is restricted to a particular (class of) attack method, and there are no formal guarantees whether the resulting model is robust against other attacks \citep{DBLP:journals/corr/abs-1802-00420,tramer2020adaptive,tjeng2018evaluating}.

Another line of algorithms train provably robust models for standard networks by maximizing the certified radius provided by robust certification methods, typically using linear relaxation \citep{wong2018provable,pmlr-v80-weng18a,mirman2018differentiable,zhang2018efficient,wang2018mixtrain,NEURIPS2018_f2f44698}, semidefinite relaxation \citep{raghunathan2018certified,pmlr-v115-dvijotham20a}, interval bound relaxation \citep{mirman2018differentiable,gowal2018effectiveness} or their combinations \citep{zhang2020towards}. However, most of these methods are sophisticated to implement and computationally expensive.
Besides these approaches, \citet{pmlr-v97-cohen19c,salman2019provably, zhai2020macer} study the certified guarantee on $\ell_2$ perturbations for Gaussian smoothed classifiers. However, recent works suggest that such methods are hard to extend to the $\ell_\infty$-perturbation scenario if the input dimension is large.

In this work, we propose a new approach by introducing a novel type of neural network that naturally resists local adversarial attacks, and can be easily certified under $\ell_\infty$ perturbation. In particular, we propose a novel neuron called $\ell_\infty$-dist neuron. Unlike the standard neuron design that uses a linear transformation followed by a non-linear activation, the $\ell_\infty$-dist neuron is purely based on computing the $\ell_\infty$-distance between the inputs and the parameters. It is straightforward to see that such a neuron is 1-Lipschitz with respect to $\ell_\infty$-norm, and the neural networks constructed with $\ell_\infty$-dist neurons (called $\ell_{\infty}$-dist nets) enjoy the same property. Based on such a property, we can efficiently obtain the certified robustness for any $\ell_{\infty}$-dist net using the margin of the prediction outputs. 

Theoretically, we investigate the expressive power of $\ell_{\infty}$-dist nets and their robust generalization ability. We first prove a Lipschitz-universal approximation theorem which shows that $\ell_{\infty}$-dist nets can approximate any 1-Lipschitz function (with respect to $\ell_{\infty}$-norm) arbitrarily well. We then give upper bounds of robust test error, which would be small if the $\ell_{\infty}$-dist net learns a large margin classifier on the training data. These results demonstrate the excellent expressivity and generalization ability of the $\ell_{\infty}$-dist net function class.

While $\ell_{\infty}$-dist nets have nice theoretical guarantees, training such a network is still challenging. For example, the gradient of the parameters for $\ell_{\infty}$-norm distance is sparse, which makes the optimization difficult. In addition, we find that commonly used tricks and techniques in conventional network training cannot be taken for granted for this fundamentally different architecture. We address these challenges by proposing a holistic strategy for $\ell_{\infty}$-dist net training. Specifically, we show how to initialize the model parameters, apply proper normalization, design suitable weight decay mechanism, and overcome the sparse gradient problem via smoothed approximated gradients. Using the above methods, training an $\ell_\infty$-dist net is just as easy as training a standard network without any adversarial training, even though the resulting model is already provably robust.

Furthermore, the $\ell_{\infty}$-dist net has wide adaptability by serving as a robust feature extractor and combining itself with conventional networks for practical applications. After building a simple 2-layer perceptron on top of an $\ell_{\infty}$-dist net, we show that the model allows fast training and certification, and consistently achieves state-of-the-art certified robustness on a wide range of classification tasks.
Concretely, we reach \textbf{93.09\%} certified accuracy on MNIST under perturbation $\epsilon=0.3$, \textbf{79.23\%} on FashionMNIST under $\epsilon=0.1$, \textbf{35.42\%} on CIFAR-10 under $\epsilon=8/255$, and \textbf{16.31\%} on TinyImageNet under $\epsilon=1/255$. 
As a comparison, these results outperform the previous best-known results \citep{xu2020automatic}, in which they achieve 33.38\% certified accuracy on CIFAR-10 dataset, and achieve 15.86\% certified accuracy on TinyImageNet using a WideResNet model which is \textit{33 times larger} than the $\ell_\infty$-dist net.

Our contributions are summarized as follows:
\begin{itemize}[topsep=0pt]
\setlength{\itemsep}{0pt}
    \item We propose a novel neural network using $\ell_{\infty}$-dist neurons, called $\ell_{\infty}$-dist net. We show that any $\ell_{\infty}$-dist net is 1-Lipschitz with respect to $\ell_\infty$-norm, which directly guarantees the certified robustness (Section 3).
    \item In the theoretical part, we prove that $\ell_{\infty}$-dist nets can approximate any 1-Lipschitz function with respect to $\ell_\infty$-norm. We also prove that $\ell_{\infty}$-dist nets have a good robust generalization ability (Section 4).
    \item In the algorithmic part, we provide a holistic training strategy for $\ell_{\infty}$-dist nets, including  parameter initialization, normalization, weight decay and smoothed approximated gradients (Section 5).
    \item We show how to combine $\ell_{\infty}$-dist nets with standard networks and obtain robust models more effectively (Section 6). Experimental results show that we can consistently achieve state-of-the-art certified accuracy on MNIST, Fashion-MNIST, CIFAR-10 and TinyImageNet dataset (Section 7).
    \item Finally, we provide all the implementation details and codes at \href{https://github.com/zbh2047/L\_inf-dist-net}{https://github.com/zbh2047/L\_inf-dist-net}.
\end{itemize}

\section{Related Work}
\label{sec_related_work}
\paragraph{Robust Training Approaches.}
Adversarial training is the most successful method against adversarial attacks. By adding adversarial examples to the training set on the fly, adversarial training methods
\citep{DBLP:journals/corr/SzegedyZSBEGF13,goodfellow2014,madry2017towards, huang2015learning,pmlr-v97-zhang19p, wong2020fast,ding2020mma} can significantly improve the robustness of conventional neural networks. However, all the methods above are evaluated according to the empirical robust accuracy against pre-defined adversarial attack algorithms, such as projected gradient decent. These methods cannot guarantee whether the resulting model is also robust against other attacks.

\paragraph{Certified Robustness for Conventional Networks.} 
Many recent works focus on certifying the robustness of learned neural networks under \emph{any} attack.
These approaches are mainly based on bounding the certified radius layer by layer using some convex relaxation methods \citep{wong2018provable,NIPS2018_8060,pmlr-v80-weng18a,mirman2018differentiable,dvijotham2018dual,zhang2018efficient,wang2018mixtrain,NEURIPS2018_f2f44698,xiao2018training,Balunovic2020Adversarial,raghunathan2018certified,pmlr-v115-dvijotham20a}. However, such approaches are usually complicated, computationally expensive and have difficulties in applying to deep and large models. To overcome these drawbacks, \citet{mirman2018differentiable,gowal2018effectiveness} considered interval bound propagation (IBP), a special form of convex relaxation which is much simpler and computationally cheaper. However, the produced bound is loose which results in unstable
training. \citet{zhang2020towards,xu2020automatic} took a further step to combine IBP with linear relaxation to make the bound tighter, which achieves current state-of-the-art performance. Fundamentally different from all these approaches that target to certify conventional networks, we proposed a novel network that provides robustness guarantee by its nature.

\paragraph{Certified Robustness for Smoothed Classifiers.} Randomized smoothing can provide a (probabilistic) certified robustness guarantee for general models. \citet{lecuyer2018certified, DBLP:journals/corr/abs-1809-03113, pmlr-v97-cohen19c, salman2019provably, zhai2020macer,zhang2020black} showed that if a Gaussian random noise is added to the input, a certified guarantee on small $\ell_2$ perturbation can be computed for Gaussian smoothed classifiers. However, \citet{yang2020randomized, blum2020random, kumar2020curse} showed that randomized smoothing cannot achieve nontrivial certified accuracy against larger than $\Omega\rbr{\min\rbr{1, d^{1/p-1/2}}}$ radius for $\ell_{p}$ perturbations, where $d$ is the input dimension. Therefore it cannot provide meaningful results for a relatively large $\ell_{\infty}$ perturbation due to the curse of dimensionality. 

\paragraph{Lipschitz Networks.} Another line of approaches sought to bound the global Lipschitz constant of the neural network. Lipschitz networks can be very useful in certifying adversarial robustness \citep{tsuzuku2018lipschitz}, proving generalization bounds \citep{7934087}, or estimating Wasserstein distance \citep{pmlr-v70-arjovsky17a}. Previous works trained Lipschitz ReLU networks by either directly constraining the spectral norm of each weight matrix to be less than one, or optimizing a loss constructed using the global Lipschitz constant which is upper bounded by these spectral norms  \citep{cisse2017parseval,yoshida2017spectral,gouk2018regularisation,tsuzuku2018lipschitz,qian2018lnonexpansive}. However, as pointed out by \citet{huster2018limitations} and \citet{anil2019sorting}, such Lipschitz networks lack expressivity to some simple Lipschitz functions and the global Lipschitz bound is not tight. Recently, \citet{anil2019sorting} proposed a new Lipschitz network that is a  Lipschitz-universal approximator. \citep{NEURIPS2019_1ce3e6e3} extended their work to convolutional architectures. However, the robustness performances are still not as good as other certification methods, and none of these methods can provide good certified results for $\ell_\infty$ robustness. In this
work, we show the proposed $\ell_\infty$-dist net is also a Lipschitz-universal approximator (under $\ell_\infty$-norm), and the Lipschitz $\ell_\infty$-dist net can substantially outperform other Lipschitz networks in term of certified accuracy.

\paragraph{$\ell_p$-dist Neurons.} We notice that recently \citet{chen2020addernet,xu2020kernel} proposed a new network called AdderNet, which leverages $\ell_1$-norm operation to build the network for sake of efficient inference. \citet{wang2019kervolutional} also considered replacing dot-product neurons by $\ell_1$-dist or $\ell_2$-dist neurons in order to enhance the model's non-linearity and expressivity. Although $\ell_\infty$-dist net looks similar to these networks on the surface, they are designed for fundamentally different problems, and in fact $\ell_1$-dist neurons (or $\ell_2$-dist neurons) can not give any robust guarantee for norm-bounded perturbations (see Appendix \ref{sec_addernet} for more discussions). 

\section{$\ell_\infty$-dist Network and its Robustness Guarantee}
\label{sec_property}
\subsection{Preliminaries} 
\label{sec:bound}
Consider a standard classification task. Suppose we have an underlying data distribution $\mathcal{D}$ over pairs of examples $\vx \in \gX$ and corresponding labels $y\in \gY =\{1,2,\cdots,M\}$ where $M$ is the number of classes. Usually $\mathcal{D}$ is unknown and we can only access a training set $\gT=\{(\vx_1,y_1),\cdots,(\vx_n,y_n)\}$ in which $(\vx_i,y_i)$ is \textit{i.i.d.} drawn from $\mathcal D$. Let $f\in\mathcal{F}$ be the classifier of interest that maps any $\vx\in\gX$ to $\gY$. We call $\vx'=\vx+\boldsymbol{\delta}$ an \textit{adversarial example} of $\vx$ to classifier $f$ if $f$ can correctly classify $\vx$ but assigns a different label to $\vx'$. In real practice, the most commonly used setting is to consider the attack under $\epsilon$-bounded $\ell_{\infty}$-norm constraint, i.e., $\boldsymbol{\delta}$ satisfies $\|\boldsymbol{\delta}\|_{\infty} \leq \eps$, which is also called $\ell_\infty$ perturbations.

Our goal is to learn a model from $\gT$ that can resist attacks at $(\vx,y)$ over $(\vx,y)\sim \mathcal{D}$ for any small $\ell_\infty$ perturbation. It relates to compute the radius of the largest $\ell_{\infty}$ ball centered at $\vx$ in which $f$ does not change its prediction. This radius is called the \textit{robust radius}, which is defined as \citep{zhai2020macer,pmlr-v97-zhang19p}:
\begin{equation}
\label{eqn:def_distance}
R(f;\vx,y)= \left\{
\begin{aligned}
\inf_{f(\vx') \neq f(\vx)} \left\| \vx'-\vx \right\|_{\infty}, & \quad f(\vx)=y \\ 
0 \quad\quad\quad\quad, & \quad f(\vx) \neq y
\end{aligned}
\right .
\end{equation} 
Unfortunately, exactly computing the robust radius  of a classifier induced by a standard deep neural network is very difficult. For example, \citet{katz2017reluplex} showed that calculating such radius for a DNN with ReLU activation is NP-hard. Researchers then seek to derive a tight \textit{lower bound} of $R(f;\vx,y)$ for general $f$. Such lower bound is called  \textit{certified radius} and we denote it as $CR(f;\vx,y)$. It follows that $CR(f;\vx,y) \leq R(f;\vx,y)$ for any $f, \vx, y$. 

\subsection{Networks with $\ell_\infty$-dist Neurons}
In this subsection, we propose a novel neuron called the $\ell_\infty$-dist neuron, which is inherently robust with respect to $\ell_\infty$-norm perturbations. Using these neurons as building blocks, we then show how to obtain robust neural networks dubbed $\ell_\infty$-dist nets.

\begin{figure}
    \centering
    \vspace{0pt}
    \includegraphics[width=0.48\textwidth]{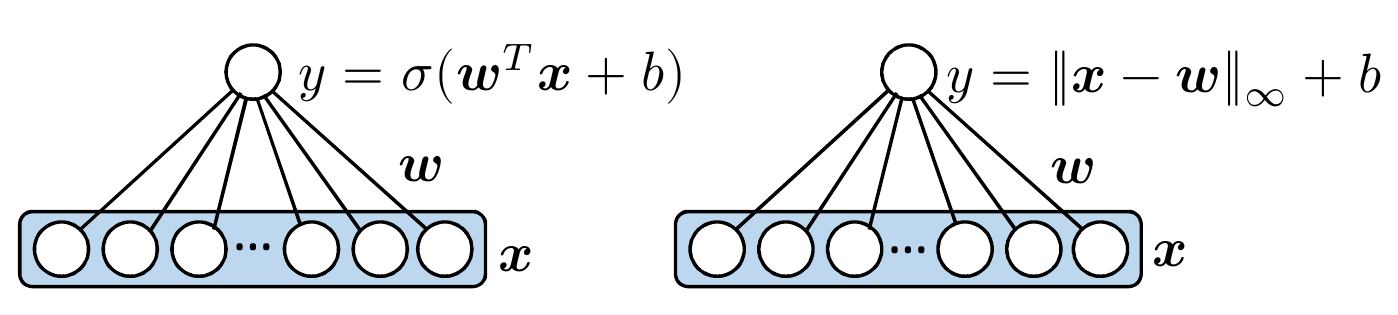}
    \vspace{-20pt}
    \caption{Illustration of the conventional neuron (left) and the $\ell_\infty$-dist neuron (right).}
    \label{fig:neuron}
    \vspace{-10pt}
\end{figure}

Denote $\vx$ as the input vector to a neuron. A standard neuron processes the input by first projecting $\vx$ to a scalar value using a linear transformation, then applying a non-linear activation function $\sigma$ on it, i.e., $\sigma(\vw^{\top}\vx +b)$ with $\vw$ and $b$ as parameters and function $\sigma$ being sigmoid or ReLU activation.
Unlike the previous design paradigm, we introduce a new type of neuron using $\ell_\infty$ distance as the basic operation, called $\ell_\infty$-dist neuron:
\begin{equation}\label{eq:acti}  
u(\vx,\theta)=\|\vx-\vw\|_{\infty}+b,
\end{equation}
where $\theta = \{\vw,b\}$ is the parameter set (see Figure \ref{fig:neuron}  for an illustration). From Eqn.~\ref{eq:acti} we can see that the $\ell_\infty$-dist neuron is non-linear as it calculates the $\ell_\infty$-distance between input $\vx$ and parameter $\vw$ with a bias term $b$. As a result, there is no need to further apply a non-linear activation function.

\begin{remark}
\label{remark_similar}
Conventional neurons use dot-product to represent the similarity between input $\vx$ and weight $\vw$. Likewise, $\ell_\infty$-distance is also a similarity measure. Note that $\ell_\infty$-distance is always non-negative, and a smaller $\ell_\infty$-distance indicates a stronger similarity.
\end{remark}

Without loss of generality, we study the properties of multi-layer perceptron (MLP) networks constructed using $\ell_\infty$-dist neurons. All theoretical results can be easily extended to other neural network architectures, such as convolutional networks. We use $\vx\in \R^{d_{\mathrm{input}}}$ to denote the input vector of an MLP network. An MLP network using $\ell_\infty$-dist neurons can be formally defined as follows.

\begin{definition} ($\ell_\infty$-dist Net)
\label{def_ell_inf_net}
Define an $L$ layer $\ell_\infty$-dist net as follows. Assume the $l$-th hidden layer contains $d_l$ hidden units. The network takes $\vx^{(0)} \triangleq \vx \in \R^{d_{\mathrm{input}}}$ as input, and the $k$-th unit in the $l$-th hidden layer $x^{(l)}_k$ is computed by 
\begin{equation}
  \begin{aligned}
   x^{(l)}_k=u(\vx^{(l-1)}, \theta^{(l,k)})=\|\vx^{(l-1)}-\vw^{(l,k)}\|_{\infty}+b^{(l,k)},\\
 1\le l\le L, 1\le k \le d_l
\end{aligned}  
\end{equation}
where $\vx^{(l)}=(x^{(l)}_1,x^{(l)}_2,\cdots,x^{(l)}_{d_{l}})$ is the output of the $l$-th layer. 
\end{definition}

For classification tasks, the dimension of the final outputs of an $\ell_\infty$-dist net matches the number of categories, i.e., $M$. Based on Remark \ref{remark_similar}, we use the negative of the final layer to be the outputs of the $\ell_\infty$-dist net $\vg$, i.e.  $\vg(\vx)=(-x_1^{(L)}, -x_2^{(L)}, \cdots, -x_M^{(L)})$ and define the predictor $f(\vx) = \argmax_{i \in [M]} g_i(\vx)$. Similar to conventional networks, we can apply any standard loss function on the $\ell_\infty$-dist net, such as the cross-entropy loss or hinge loss.

\subsection{Lipschitz and Robustness Facts about $\ell_{\infty}$-dist Nets}
In this subsection, we will show that the $\ell_\infty$-dist neurons and the neural networks constructed using them have nice theoretical properties in controlling the robustness of the model. We first show that $\ell_{\infty}$-dist nets are 1-Lipschitz with respect to $\ell_\infty$-norm, then derive the certified robustness of the model based on such property.

\begin{definition} (Lipschitz Function)
A function $\vg(\vz): \R^m \rightarrow \R^n$ is called $\lambda$-Lipschitz with respect to $\ell_p$-norm $\|\cdot \|_p$, if for any $\vz_1, \vz_2$, the following holds:
$$\|\vg(\vz_1)-\vg(\vz_2)\|_p\le \lambda \|\vz_1-\vz_2\|_p$$
\end{definition}

\begin{fact}\label{thm:lip}
Any $\ell_\infty$-dist net $\vg(\cdot)$ is 1-Lipschitz with respect to $\ell_\infty$-norm, i.e., for any $\vx_1, \vx_2\in \R^{d_{\mathrm{input}}}$, we have 
$\|\vg(\vx_1)-\vg(\vx_2)\|_{\infty}\le \|\vx_1-\vx_2\|_{\infty}$.
\end{fact}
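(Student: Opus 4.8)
The plan is to prove the statement layer by layer and then compose. First I would verify that a single $\ell_\infty$-dist neuron $u(\cdot,\theta)$ is $1$-Lipschitz as a map $\R^m\to\R$ (the codomain being $\R$, where $\|\cdot\|_\infty$ is just $|\cdot|$). This is immediate from the reverse triangle inequality for norms: for any $\vx_1,\vx_2$,
$$|u(\vx_1,\theta)-u(\vx_2,\theta)|=\bigl|\,\|\vx_1-\vw\|_\infty-\|\vx_2-\vw\|_\infty\,\bigr|\le\|(\vx_1-\vw)-(\vx_2-\vw)\|_\infty=\|\vx_1-\vx_2\|_\infty,$$
where the bias term $b$ cancels and plays no role.

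Next I would promote this to a full hidden layer. The layer map sends $\vx^{(l-1)}$ to the tuple $(u(\vx^{(l-1)},\theta^{(l,1)}),\dots,u(\vx^{(l-1)},\theta^{(l,d_l)}))$, and the $\ell_\infty$-norm of this output vector is precisely the maximum over its coordinates. Applying the per-neuron bound coordinatewise gives
$$\|\vx^{(l)}_1-\vx^{(l)}_2\|_\infty=\max_{1\le k\le d_l}|u(\vx^{(l-1)}_1,\theta^{(l,k)})-u(\vx^{(l-1)}_2,\theta^{(l,k)})|\le\|\vx^{(l-1)}_1-\vx^{(l-1)}_2\|_\infty,$$
so each layer is $1$-Lipschitz from $(\R^{d_{l-1}},\|\cdot\|_\infty)$ to $(\R^{d_l},\|\cdot\|_\infty)$. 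Finally I would conclude by a trivial induction on the depth $L$: chaining the layerwise inequalities yields $\|\vx^{(L)}_1-\vx^{(L)}_2\|_\infty\le\|\vx_1-\vx_2\|_\infty$, and since $\vg$ is the coordinatewise negation of $\vx^{(L)}$, which is an $\ell_\infty$-isometry, the bound $\|\vg(\vx_1)-\vg(\vx_2)\|_\infty\le\|\vx_1-\vx_2\|_\infty$ follows.

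I do not expect a genuine obstacle: the argument is elementary, resting only on the reverse triangle inequality and the fact that the composition of $1$-Lipschitz maps is $1$-Lipschitz. The one point worth emphasizing — and the real reason the construction works — is the alignment between how a layer aggregates its neurons (a $\max$) and the metric in use ($\ell_\infty$): stacking $d_l$ individually $1$-Lipschitz neurons incurs \emph{no} dimension-dependent blow-up in the $\ell_\infty$ metric. This is precisely where an analogous construction with $\ell_1$- or $\ell_2$-dist neurons would break down, since the corresponding aggregation in those norms introduces a factor growing with the layer width.
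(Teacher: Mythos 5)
Your proof is correct and follows essentially the same route as the paper's: show each $\ell_\infty$-dist neuron is $1$-Lipschitz, lift this to the layer map, and compose; you simply fill in the details (reverse triangle inequality, the $\max$/$\ell_\infty$ alignment, negation being an isometry) that the paper leaves to the reader. Your closing observation about why the argument fails for $\ell_p$-dist neurons with $p<\infty$ is exactly the content of the paper's Remark~\ref{remark_lip}.
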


\begin{proof}
It's easy to check that every basic operation $u(\vx^{(l-1)}, \theta^{(l,k)})$ is 1-Lipschitz, and therefore the mapping from one layer to the next $\vx^{(l)}\to \vx^{(l+1)}$ is 1-Lipschitz. Finally by composition we have for any $\vx_1, \vx_2\in \R^{d_{input}}$, $\|\vg(\vx_1)-\vg(\vx_2)\|_{\infty}\le \|\vx_1-\vx_2\|_{\infty}$.
\end{proof}

%\begin{remark}
%\label{remark_dag}
%The Lipschitz property of $\ell_\infty$-dist nets does not rely on layer-wise architectures. In fact, for any direct acyclic graph (DAG), the $\ell_\infty$-dist net constructed using procedures in Definition \ref{def_ell_inf_net} is 1-Lipschitz.
%\end{remark}

\begin{remark}
\label{remark_lip}
Fact \ref{thm:lip} is only true when the basic neuron uses \emph{infinity-norm} distance. For a network constructed using $\ell_p$-dist neurons where $p<\infty$ (e.g. 2-norm), such network will not be 1-Lipschitz (even with respect to $\ell_p$-norm) because the mapping from one layer to the next $\vx^{(l)}\to \vx^{(l+1)}$ is not 1-Lipschitz.
\end{remark}

Since $\vg$ is 1-Lipschitz with respect to $\ell_\infty$-norm, if the perturbation over $\vx$ is rather small, the change of the output can be bounded and the prediction of the perturbed data $\vx'$ will not change as long as $\argmax_{i \in [M]} g_i(\vx) = \argmax_{i \in [M]} g_i(\vx')$, which directly bounds the certified radius.

\begin{fact}\label{thm:mar}
Given model $f(\vx)=\argmax_{i\in[M]}g_i(\vx)$ defined above, and $\vx$ being correctly classified, we define $\mathrm{margin}(\vx;\vg)$ as the difference between the largest and second-largest elements of $\vg(\vx)$. Then for any $\vx'$ satisfying $\|\vx-\vx'\|_{\infty}< \mathrm{margin}(\vx;\vg)/2$, we have that $f(\vx)=f(\vx')$. In other words, 
\begin{equation}\label{eq:cor1}
CR(f,\vx,y)\ge \mathrm{margin}(\vx;\vg)/2
\end{equation}
\end{fact}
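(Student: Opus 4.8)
The plan is to chase the definition of the margin directly through the 1-Lipschitz property already established in Fact~\ref{thm:lip}. Write $i^\star = f(\vx) = \argmax_{i\in[M]} g_i(\vx)$ for the predicted class, which by hypothesis equals the true label $y$. The first step is a purely combinatorial observation about $\vg(\vx)$: since $\mathrm{margin}(\vx;\vg)$ is the gap between the largest and second-largest entries of $\vg(\vx)$, every competing coordinate satisfies $g_{i^\star}(\vx) - g_j(\vx) \ge \mathrm{margin}(\vx;\vg)$ for all $j \ne i^\star$, because $g_j(\vx)$ is at most the second-largest entry, which is exactly $g_{i^\star}(\vx) - \mathrm{margin}(\vx;\vg)$.

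The second step is to transport this gap to $\vx'$. By Fact~\ref{thm:lip}, $\|\vg(\vx)-\vg(\vx')\|_\infty \le \|\vx-\vx'\|_\infty$, so coordinate-wise $|g_i(\vx) - g_i(\vx')| \le \|\vx-\vx'\|_\infty$ for every $i$. Under the hypothesis $\|\vx-\vx'\|_\infty < \mathrm{margin}(\vx;\vg)/2$, this gives $g_{i^\star}(\vx') > g_{i^\star}(\vx) - \mathrm{margin}(\vx;\vg)/2$ and $g_j(\vx') < g_j(\vx) + \mathrm{margin}(\vx;\vg)/2$ for each $j\ne i^\star$. Subtracting and invoking the first-step inequality yields $g_{i^\star}(\vx') - g_j(\vx') > \big(g_{i^\star}(\vx) - g_j(\vx)\big) - \mathrm{margin}(\vx;\vg) \ge 0$, so $i^\star$ remains the unique argmax and $f(\vx') = i^\star = f(\vx)$.

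Finally, Eqn.~\ref{eq:cor1} follows by unwinding the definition of the robust radius: the previous step shows $f$ is constant on the open $\ell_\infty$-ball of radius $\mathrm{margin}(\vx;\vg)/2$ centered at $\vx$, so there is no adversarial example within that radius and hence $R(f;\vx,y) \ge \mathrm{margin}(\vx;\vg)/2$; therefore $\mathrm{margin}(\vx;\vg)/2$ is a valid certified radius. There is no genuinely hard step here — the argument is a one-line computation once the margin inequality is set up. The only points requiring care are the bookkeeping that every non-predicted coordinate should be compared against the \emph{second-largest} entry (not the largest), and keeping the inequality on $\|\vx-\vx'\|_\infty$ strict, which is exactly what forces the final difference strictly above $0$ and so preserves uniqueness of the argmax (the edge case $\mathrm{margin}(\vx;\vg)=0$ makes the statement vacuous, so it needs no separate treatment).
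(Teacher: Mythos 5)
Your proposal is correct and follows essentially the same route as the paper's proof: both arguments use the 1-Lipschitz property from Fact~\ref{thm:lip} to bound each coordinate's movement by strictly less than $\mathrm{margin}(\vx;\vg)/2$ and conclude that the top coordinate's lead over every competitor survives. Your write-up is simply a more explicit version of the paper's one-line argument, with careful handling of the strict inequality and the comparison against the second-largest entry.
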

\begin{proof}
Since $\vg(\vx)$ is 1-Lipschitz, each element of $\vg(\vx)$ can move at most $\mathrm{margin}(\vx;\vg)/2$ when $\vx$ changes to $\vx'$, therefore the largest element will remain the same.
\end{proof}
Using this bound, we can certify the robustness of an $\ell_{\infty}$-dist net \emph{of any size} under $\ell_{\infty}$-norm perturbations with little computational cost (\emph{only a forward pass}). In contrast, existing certified methods may suffer from either poor scalability (methods based on linear relaxation) or curse of dimensionality (randomized smoothing).

\section{Theoretical Properties of $\ell_{\infty}$-dist Nets}
The expressive power of a model family and its generalization are two central topics in machine learning. Since we have shown that $\ell_{\infty}$-dist nets are 1-Lipschitz with respect to $\ell_\infty$-norm, it's natural to ask whether $\ell_{\infty}$-dist nets can approximate \emph{any} 1-Lipschitz function (with respect to $\ell_\infty$-norm) and whether we can give generalization guarantee on the robust test error based on the Lipschitz property. In this section, we give affirmative answers to both questions. Without loss of generality, we consider \emph{binary classification} problems and assume the output dimension is 1. All the omitted proofs in this section can be found in Appendix \ref{sec_proof}.

\subsection{Lipschitz-Universal Approximation of $\ell_{\infty}$-dist Nets}
It is well-known that the conventional network is a universal approximator, in that it can approximate any continuous function arbitrarily well \citep{cybenko1989approximation}.
Similarly, in this section we will prove a Lipschitz-universal approximation theorem for $\ell_{\infty}$-dist nets, formalized in the following: 

\begin{theorem}\label{thm:inflayer}
For any 1-Lipschitz function $\tilde{g}(\vx)$ (with respect to $\ell_{\infty}$-norm) on a bounded domain $\sK\in \mathbb R^{d_{\text{input}}}$ and any $\epsilon>0$, there exists an $\ell_{\infty}$-dist net $g(\vx)$ with width no more than $d_{input}+2$, such that for all $\vx\in \sK$, we have $\|g(\vx)-\tilde{g}(\vx)\|_{\infty} \le \epsilon$.
\end{theorem}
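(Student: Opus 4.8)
The plan is to bypass any differentiability/linearization argument and instead exploit a classical representation of Lipschitz functions as an infimum of cones, truncate it to a finite minimum, and then realize that finite minimum \emph{exactly} as a narrow-but-deep $\ell_\infty$-dist net by simulating the running minimum one coordinate at a time. Throughout I use that the output dimension is $1$ (binary classification).

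\textbf{Step 1 (min-of-cones representation).} For a $1$-Lipschitz $\tilde g$ on $\sK$, define $h(\vx)=\inf_{\vy\in \sK}\bigl(\tilde g(\vy)+\|\vx-\vy\|_{\infty}\bigr)$. Taking $\vy=\vx$ gives $h(\vx)\le\tilde g(\vx)$, while $1$-Lipschitzness gives $\tilde g(\vy)+\|\vx-\vy\|_{\infty}\ge\tilde g(\vx)$ for every $\vy$, so $h=\tilde g$ on $\sK$. Since $\sK$ is bounded, for any $\delta>0$ pick a finite $\delta$-net $\vy_1,\dots,\vy_N\in\sK$ in $\ell_\infty$ and set $h_N(\vx)=\min_{1\le i\le N}\bigl(\|\vx-\vy_i\|_{\infty}+c_i\bigr)$ with $c_i=\tilde g(\vy_i)$. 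Each term is $\ge\tilde g(\vx)$, and the term attached to the nearest $\vy_i$ is $\le\tilde g(\vx)+2\delta$; hence $\|h_N-\tilde g\|_{\infty,\sK}\le 2\delta$, and we take $\delta=\epsilon/2$.

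\textbf{Step 2 (realizing $h_N$ with width $d_{\text{input}}+2$).} On the bounded set $\sK$ all relevant quantities --- the coordinates of $\vx$, the cone values $z_i:=\|\vx-\vy_i\|_{\infty}+c_i$, and the running minima $m_i:=\min_{j\le i}z_j$ --- lie in a fixed interval $[-B,B]$. On such a domain a single $\ell_\infty$-dist neuron can, by setting one weight coordinate to lie far outside $[-B,B]$ (so that one term dominates the $\max$) and choosing the bias accordingly, compute \emph{exactly}: (i) the identity of any one input coordinate (a ``copy'' neuron); (ii) the negation $q\mapsto -q$ of an input coordinate; (iii) $(p,q)\mapsto -\min(p,q)$, by putting equal large-positive weights on coordinates $p,q$ and zero on the rest (and symmetrically $\max(p,q)$ with large-negative weights); and (iv) the cone $\vx\mapsto\|\vx-\vy_i\|_{\infty}+c_i$ directly. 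Using these gadgets, build the net in stages. Layer $1$ outputs $(\vx,z_1)$ via $d_{\text{input}}$ copy neurons and one cone neuron. Stage $i\in\{2,\dots,N\}$ sends $(\vx,m_{i-1})$ to $(\vx,m_i)$ in three layers: copy $\vx$, copy $m_{i-1}$, compute $z_i$ (width $d_{\text{input}}+2$); then copy $\vx$ and compute $-m_i=-\min(m_{i-1},z_i)$ (width $d_{\text{input}}+1$); then copy $\vx$ and negate to recover $m_i$ (width $d_{\text{input}}+1$). After stage $N$ the output is $m_N=h_N(\vx)$ (or $-m_N$, which matches the $\vg=-\vx^{(L)}$ convention of Definition~\ref{def_ell_inf_net}); since the construction is exact, $\|g-\tilde g\|_{\infty,\sK}=\|h_N-\tilde g\|_{\infty,\sK}\le\epsilon$. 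The depth grows like $N\sim(\mathrm{diam}_\infty\sK/\epsilon)^{d_{\text{input}}}$, but the theorem constrains only the width.

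\textbf{Main obstacle.} Step 1 is the standard McShane--Whitney-type argument and is routine; the work is in Step 2, and specifically in the min/copy gadgets. An $\ell_\infty$-dist neuron only ever outputs a maximum of absolute values plus a nonnegative bias, so one must (a) use boundedness of the domain to linearize each $|x-w|$ into $\pm x+\text{const}$, and (b) absorb the sign flip that appears every time a new cone is folded into the running minimum, paying for it with one extra negation layer per stage so that the width budget $d_{\text{input}}+2$ is never exceeded.
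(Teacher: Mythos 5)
Your overall route is essentially the paper's, in mirror image: where you write $\tilde g$ as a minimum of upward cones $\|\vx-\vy_i\|_\infty+c_i$ approximating from above, the paper's Lemma~\ref{lem:key} writes it as a maximum of ``pyramids'' $-\|\vx-\vw^{(i)}\|_\infty+\tilde g(\vw^{(i)})$ approximating from below, and both then simulate a running extremum with a width-$(d_{\text{input}}+2)$ net built from single-neuron copy/negate/max gadgets that exploit boundedness of the domain. Your Step~1 and your gadgets (i)--(iii) are correct.

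There is, however, a genuine gap in gadget (iv) as you deploy it inside a stage. You ask one neuron, whose input is the full previous layer $(\vx,m_{i-1})$ (and possibly more passengers), to output $\|\vx-\vy_i\|_\infty+c_i$ \emph{exactly}. But an $\ell_\infty$-dist neuron computes $\max_k|z_k-w_k|+b$ over \emph{all} input coordinates, and every coordinate contributes a nonnegative term to the max that cannot be suppressed: the passenger contributes $|m_{i-1}-w_{d+1}|$, whereas the target value $\|\vx-\vy_i\|_\infty$ can be arbitrarily small (it is $0$ at $\vx=\vy_i$). Since $m_{i-1}$ varies with $\vx$ over a range of constant width while $w_{d+1}$ is a fixed scalar, you cannot guarantee $|m_{i-1}-w_{d+1}|\le\|\vx-\vy_i\|_\infty$, so the neuron actually computes $\max\bigl(\|\vx-\vy_i\|_\infty,\,|m_{i-1}-w_{d+1}|\bigr)+c_i$, an inflated $\tilde z_i\ge z_i$; the one-sided bound $m_N\ge\tilde g$ survives, but the upper bound $m_N\le\tilde g+\epsilon$ (which relies on the \emph{nearest} center producing a small term) does not. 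Note that your gadgets (i)--(iii) escape this problem precisely because there the wanted coordinates are shifted by $\pm 2C$ so that they dominate every passenger; the raw cone is the one computation where the wanted value is not shifted. The fix is to abandon the one-shot cone and accumulate $\pm(x_j-(\vy_i)_j)$ one coordinate per layer using your max gadget (costing $O(d_{\text{input}})$ extra depth per stage, still within width $d_{\text{input}}+2$) --- which is exactly what the paper's Proposition~\ref{prop:3p} and Eqn.~\ref{eq:rec1} do. With that repair your argument goes through.
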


We briefly present a proof sketch of Theorem $\ref{thm:inflayer}$. The proof has the same structure to \citet{NIPS2017_7203}, who first proved such universal approximation theorem for width-bounded ReLU networks, by constructing a special network that approximates a target function $\tilde{g}(\vx)$ by the sum of indicator functions of grid points, i.e. $\tilde{g}(\vx)\approx\sum_{\vz\in \mathbb S} \tilde{g}(\vz)\mathbf 1_{\{\vx:\|\vx-\vz\|_{\infty}\le \epsilon/2\}}(\vx)$ where $\mathbb S=\{\vz\in \mathbb K:z_i=C_i\epsilon,C_i\in \mathbb Z\}$ . For $\ell_{\infty}$-dist nets, such an approach cannot be directly applied as the summation will break the Lipschitz property. We employ a novel ``max of pyramids'' construction to overcome the issue. The key idea is to approximate the target function using the maximum of many ``pyramid-like'' basic 1-Lipschitz functions, i.e. $\tilde{g}(\vx)\approx g(x):=\max_{\vz\in \mathbb S} (\tilde{g}(\vz)-\|\vx-\vz\|_{\infty})$. To represent such a function, we first show that the $\ell_\infty$-dist neuron can express the following basic functions: $f(\vx)=\|\vx-\vz\|_{\infty}$, $f(\vx)=x_i+b$, $f(\vx)=-x_i+b$ and $f(\vx)=\max(x_i,x_j)$; Then $g(x)$ can be constructed using these functions as building blocks. Finally, we carefully design a computation pattern for a width-bounded $\ell_\infty$-dist net to perform such max-reduction layer by layer.

Theorem \ref{thm:inflayer} implies that an $\ell_{\infty}$-dist net can approximate any 1-Lipschitz function with respect to $\ell_{\infty}$-norm on a compact set, using width barely larger than the input dimension. Combining it with Fact \ref{thm:lip}, we conclude that $\ell_{\infty}$-dist nets are a good class of models to approximate 1-Lipschitz functions.

\subsection{Bounding Robust Test Error of $\ell_{\infty}$-dist Nets}
In this subsection, we give a generalization bound for the \emph{robust test error} of $\ell_{\infty}$-dist nets. Let $(\vx, y)$ be an instance-label couple where $\vx\in \sK$ and $y\in \{1,-1\}$ and denote $\mathcal{D}$ as the distribution of $(\vx, y)$. For a function $g(\vx): \R^{d_{\mathrm{input}}} \rightarrow \R$, we use $\sign(g(\vx))$ as the classifier. The $r$-robust test error $\gamma_r$ of a classifier $g$ is defined as
$$\gamma_r=\mathbb E_{(x,y)\sim\mathcal{ D}}\sbr{\sup_{\|\vx'-\vx\|_{\infty}\le r}\mathbb I_{y g(\vx')\le 0}}.$$
Then $\gamma_r$ can be upper bounded by the margin error on training data and the size of the network, as stated in the following theorem:

\begin{theorem}\label{thm:rob}
Let $\sF$ denote the set of all $g$ represented by an $\ell_{\infty}$-dist net with width $W(W\ge d_{\text{input}})$ and depth $L$. For every $t>0$, with probability at least $1-2e^{-2t^2}$ over the random drawing of $n$ samples, for all $ r>0$ and $g\in \sF$ we have that
\begin{equation}\label{eq:lipgene}
\begin{aligned}
\gamma_r \le \inf_{\delta\in (0,1]} \left[\frac{1}{n}\sum_{i=1}^n \right.\underbrace{\mathbb I_{y_i g(\vx_i)\le \delta+r}}_{\text{large training margin}}+\underbrace{\tilde{O}\rbr{\frac{LW^2}{\delta \sqrt{n}}}}_{\text{network size}}\\
\left.+\rbr{\frac{\log\log_2(\frac{2}{\delta})}{n}}^{\frac{1}{2}}\right]+\frac{t}{\sqrt{n}}.  
\end{aligned}
\end{equation}
\end{theorem}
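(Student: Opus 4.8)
The plan is to first strip the adversary out of $\gamma_r$ using the Lipschitz property (Fact~\ref{thm:lip}), turning the robust‑error bound into an ordinary \emph{margin} generalization bound for the scalar function class $\sF$, and then to run the standard ramp‑loss/Rademacher machinery, with the one genuinely nonstandard ingredient being a complexity (covering‑number) bound for $\ell_\infty$-dist nets.

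\textbf{Step 1 (remove the adversary).} Since the output of $g\in\sF$ is one‑dimensional, Fact~\ref{thm:lip} says $|g(\vx')-g(\vx)|\le\|\vx'-\vx\|_\infty$. Hence if $yg(\vx)>r$ then $yg(\vx')>0$ for every $\vx'$ with $\|\vx'-\vx\|_\infty\le r$, so pointwise $\sup_{\|\vx'-\vx\|_\infty\le r}\mathbb I_{yg(\vx')\le 0}\le\mathbb I_{yg(\vx)\le r}$. Taking expectations, $\gamma_r\le\mathbb E_{(\vx,y)\sim\mathcal D}\,\mathbb I_{yg(\vx)\le r}$, and this holds simultaneously for all $r>0$.

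\textbf{Step 2 (margin bound at a fixed scale).} Fix $\delta\in(0,1]$ and use the clipped ramp $\phi_{r,\delta}(u)$ that is $1$ for $u\le r$, $0$ for $u\ge r+\delta$, and linear in between; it is $\tfrac1\delta$-Lipschitz, $[0,1]$-valued, and sandwiched as $\mathbb I_{u\le r}\le\phi_{r,\delta}(u)\le\mathbb I_{u\le r+\delta}$. Then $\mathbb E_{\mathcal D}\mathbb I_{yg(\vx)\le r}\le\mathbb E_{\mathcal D}\phi_{r,\delta}(yg(\vx))$ and $\tfrac1n\sum_i\phi_{r,\delta}(y_ig(\vx_i))\le\tfrac1n\sum_i\mathbb I_{y_ig(\vx_i)\le r+\delta}$, so it suffices to bound $\sup_{g\in\sF,\,r>0}\big(\mathbb E_{\mathcal D}\phi_{r,\delta}(yg(\vx))-\tfrac1n\sum_i\phi_{r,\delta}(y_ig(\vx_i))\big)$. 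I would do this the usual way: McDiarmid with bounded differences $1/n$ (this is the source of the $e^{-2t^2}$ tail and the additive $t/\sqrt n$), symmetrization to a Rademacher average, and Talagrand's contraction to peel off the $\tfrac1\delta$-Lipschitz ramp, leaving $\tfrac1\delta\gR_n(\sF)$ (using that $\sigma_iy_i$ is again a Rademacher sign). The only subtlety is the extra supremum over the threshold $r$: after clipping, $r$ ranges over a one‑parameter \emph{monotone} family of $[0,1]$-valued functions, whose Rademacher complexity is $\tilde O(1/\sqrt n)$, and is absorbed.

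\textbf{Step 3 (the crux: complexity of $\ell_\infty$-dist nets).} The heart of the proof is $\gR_n(\sF)=\tilde O\!\big(LW^2/\sqrt n\big)$, which I would obtain via a covering‑number argument. The parameters $\{\vw^{(l,k)},b^{(l,k)}\}$ number $O(LW^2)$, and the parameter$\to$function map is Lipschitz: each operation $u(\vx^{(l-1)},\theta^{(l,k)})=\|\vx^{(l-1)}-\vw^{(l,k)}\|_\infty+b^{(l,k)}$ is $1$-Lipschitz in $\vx^{(l-1)}$ (so a perturbation propagates through the $L$ layers without amplification) and $1$-Lipschitz in $(\vw^{(l,k)},b^{(l,k)})$, so perturbing all parameters of one layer by $\epsilon$ moves every coordinate of that layer by $\le\epsilon$ and hence moves $g$ by $\le L\epsilon$. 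Together with a bound on the range of $g$ over the bounded domain $\sK$ (which grows at most linearly in $L$ under the boundedness of the parameters), an $\epsilon$-net of the parameter space of cardinality $\exp\!\big(O(LW^2\log(1/\epsilon))\big)$ gives an $\epsilon$-cover of $\sF$ in the empirical $\ell_\infty$ metric; feeding $\log N(\sF,\epsilon)=\tilde O(LW^2\log(1/\epsilon))$ into Dudley's entropy integral yields the claim, with $\tilde O$ absorbing logarithmic factors and the (bounded) dependence on $\sK$ and on the parameter norms. I expect this step — tracking parameter perturbations through the non‑smooth $\ell_\infty/\max$ operations and getting the stated polynomial dependence on $L,W$ — to be the main obstacle.

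\textbf{Step 4 (union over $\delta$ and assembly).} Finally, I would apply Steps 2–3 with $\delta=\delta_j=2^{-j}$, $j\ge1$, allocating failure probability $\propto 2^{-j}$ to the $j$-th scale; the union replaces the fixed scale by $\inf_{\delta\in(0,1]}$ and produces the $\big(\log\log_2(2/\delta)/n\big)^{1/2}$ term. Chaining with the sandwich of Step 2 and the Lipschitz reduction of Step 1 turns the right‑hand side into $\tfrac1n\sum_i\mathbb I_{y_ig(\vx_i)\le\delta+r}+\tilde O\!\big(LW^2/(\delta\sqrt n)\big)$, and collecting the high‑probability terms into a single $t/\sqrt n$ (the factor $2$ in the confidence absorbing the one‑sided McDiarmid together with the scale union) gives exactly \eqref{eq:lipgene}.
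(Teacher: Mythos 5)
Your overall architecture is sound and, at the top level, mirrors the paper's: your Step 1 is exactly the paper's Lemma~\ref{lem:robust} (the Lipschitz property converts the $r$-robust test error into the $r$-margin test error), and your Steps 2 and 4 re-derive from scratch what the paper simply imports as a black box from \citet{koltchinskii2002empirical} (Lemmas~\ref{lem:rade} and~\ref{lem:rader}, which already package the ramp loss, symmetrization, contraction, and the union over dyadic scales $\delta$ that produces the $(\log\log_2(2/\delta)/n)^{1/2}$ term). The genuine divergence is in Step 3. The paper does \emph{not} use a covering-number/Dudley argument: it bounds the \emph{VC dimension} of the $\ell_\infty$-dist net class by showing each $\ell_\infty$-dist neuron can be simulated by a ReLU subnetwork of width $O(W)$ and depth $O(\log W)$ (via $\max\{a,b\}=\operatorname{ReLU}(a-b)+\operatorname{ReLU}(b)-\operatorname{ReLU}(-b)$ and a binary max-reduction), so the whole net embeds in a ReLU network of width $O(W^2)$ and depth $O(L\log W)$; the VC bound $\tilde O(L^2W^4)$ of \citet{JMLR:v20:17-612} then gives $R_n(\sF)=\tilde O(LW^2/\sqrt n)$ via the Massart-type inequality. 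Both routes yield the same rate, and yours is more self-contained (no appeal to the ReLU VC-dimension theorem), but it buys this at a real cost: a parameter-space $\epsilon$-net and the Dudley integral require an a priori bound on the parameter magnitudes and on the range of $g$ over $\sK$, whereas the theorem defines $\sF$ as \emph{all} $\ell_\infty$-dist nets of the given width and depth with no norm constraint, and the stated bound has no dependence on parameter norms. The VC-dimension route is scale-free and sidesteps this entirely; your version would need either to add a norm constraint to the function class or to argue that the $\log$ of the relevant norms can legitimately be hidden in $\tilde O$, so as written Step 3 proves a slightly weaker (norm-constrained) statement. One small point in your favor: your handling of the uniformity over $r$ via a monotone one-parameter family of thresholds is arguably cleaner than the paper's constant-shift remark in the proof of Lemma~\ref{lem:rader}.
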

Theorem \ref{thm:rob} demonstrates that when a large margin classifier is found on training data, and the size of the $\ell_{\infty}$-dist net is not too large, then with high probability, the model can generalize well in terms of \textit{adversarial robustness}.
Note that our bound does not depend on the input dimension, while previous generalization bounds for the general Lipschitz model class (i.e. \citet{luxburg2004distance}) suffer from the curse of dimensionality \cite{neyshabur2017exploring}.
% Notice that using only the Lipschitz property of a model class can only derive generalization bounds \cite{luxburg2004distance} which suffer from the curse of dimensionality \cite{neyshabur2017exploring}.

% The theory also suggests that we can just use the standard multi-class hinge loss as the training objective function to obtain a large training margin, similar to \citet{anil2019sorting}. Since the loss is differentiable almost everywhere, any gradient-based optimization method can be used to train $\ell_{\infty}$-dist nets.

% We prove the theorem in two steps. One step is to provide a margin bound to control the gap between standard training error and standard test error. In this step, we use Rademacher complexity $R(\sF)$ of a hypothesis set $\sF$, where the test error $\beta_r$ with $r$ as margin is defined as: $\beta_r=\mathbb{E}_{\mathcal{D}}\sbr{\mathbb{I}_{y g(\vx)\le r}}$. The other step is to bound the gap between test error and robust test error. The two steps are provided in the following two lemmas.

% \begin{lemma}\label{lem:rader}
% Let $\sF$ be a hypothesis class, then for any $t>0$,
% $$\mathbb{P}\rbr{\exists g\in \sF: \beta_r> \inf_{\delta\in (0,1]}\sbr{\frac{1}{n}\sum_{i=1}^n \mathbb{I}_{y_i g(\vx_i)\le \delta+r}+\frac{48}{\delta}R_n(\sF)+\rbr{\frac{\log\log_2(\frac{2}{\delta})}{n}}^{\frac{1}{2}}}+\frac{t}{\sqrt{n}}} \le 2e^{-2t^2}.$$
% \end{lemma}

% \begin{lemma}\label{lem:robust}
% The $r$-robust test error $\gamma_r$ is no larger than the test error $\beta_r$, i.e., $\gamma_r\le \beta_r$.
% \end{lemma}

\section{Training $\ell_{\infty}$-dist Nets}
\label{sec_optimize}
In this section we will focus on how to train an $\ell_{\infty}$-dist net successfully. Motivated by the theoretical analysis in previous sections, it
suffices to find a large margin solution on the training data. Therefore we can simply use the standard multi-class hinge loss to obtain a large training margin, similar to \citet{anil2019sorting}. Since the loss is differentiable almost everywhere, any gradient-based optimization method can be used to train $\ell_{\infty}$-dist nets.

However, we empirically find that the optimization is challenging and directly training the network usually \textit{fails} to obtain a good performance. Moreover, conventional wisdom like batch normalization cannot be taken as a grant in the $\ell_{\infty}$-dist net since it will hurt the model's robustness. In the following we will dig into the optimization difficulties and provide a holistic training strategy to overcome them.

\subsection{Normalization}
\label{sec_bn}
One important difference between $\ell_\infty$-dist nets and conventional networks is that for conventional networks, the output of a linear layer is unbiased (zero mean in expectation) under random initialization over parameters \citep{glorot2010understanding,he2015delving}, while the output of an $\ell_\infty$-dist neuron is biased (always being non-negative, assuming no bias term). Indeed, for a weight vector initialized using a standard Gaussian distribution, assume a zero input $\vx^{(0)}=\mathbf 0\in \mathbb R^d$ is fed into an $\ell_\infty$-dist net, the expected output of the first layer can be approximated by $x_j^{(1)}=\mathbb E_{\vw^{(1,j)}}\|\vx^{(0)}-\vw^{(1,j)}\|_\infty\approx \sqrt{2\log d}$. The output vector $\vx^{(1)}$ is then fed into subsequent layers, making the outputs in upper layers linearly increase. %Moreover, similar to ReLU activation, if the neuron values are far from zero, part of non-linearity of $\ell_\infty$-dist operation will be lost as $\|\vx-\vw\|_\infty$ will degenerate to $\max_j {x_j-w_j}$ if $\vx$ is always large.

Normalization is a useful way to control the scale of the layer's outputs to a standard range. Batch Normalization \citep{ioffe2015batch}, which shifts and scales feature values in each layer, is shown to be one of the most important components in training deep neural networks. However, if we directly apply batch normalization in $\ell_{\infty}$-dist nets, the Lipschitz constant will change due to the scaling operation, and the robustness of the model cannot be guaranteed. 

Fortunately, we find using the shift operation alone already helps the optimization. Therefore we apply the shift operation in all intermediate layers after calculating the $\ell_{\infty}$ distance. As a result, we remove the bias terms in the corresponding $\ell_{\infty}$-dist neurons as they are redundant. We do not use normalization in the final layer. Similar to BatchNorm, we use the running mean during inference, which serves as additional bias terms in $\ell_{\infty}$-dist neurons and does not affect the Lipschitz constant of the model. We do not use affine transformation, which is typically used in BatchNorm.

\subsection{Smoothed Approximated Gradients}
\label{sec_app_grad}
We find that training an $\ell_{\infty}$-dist net from scratch is usually inefficient, and the optimization can easily be stuck at some bad solution. One important reason is that the gradients of the $\ell_\infty$-dist operation (i.e., $\nabla_{\vw}\|\vz-\vw\|_{\infty}$ and $\nabla_{\vz}\|\vz-\vw\|_{\infty}$) are very sparse which typically contain only one non-zero element. In practice, we observe that there are less than 1\%  parameters updated in an epoch if we directly train the $\ell_{\infty}$-dist net using SGD/Adam from random initialization. 

To improve the optimization, we relax the $\ell_{\infty}$-dist neuron by using the $\ell_{p}$-dist neuron for the whole network to get an approximate and non-sparse gradient of the model parameters. During training, we set $p$ to be a small value in the beginning and increase it in each iteration until it approaches infinity. For the last few epochs, we set $p$ to infinity and train the model to the end. Empirically, using smoothed approximated gradients significantly boosts the performance, as will be shown in Section \ref{sec_ablation}.

\subsection{Parameter Initialization}
%Similar to conventional networks, we use Gaussian distribution to initialize parameters. %For $\ell_\infty$-dist operation, one should expect that weights and inputs have the same scale, therefore we initialize all weights from standard Gaussian distribution since the inputs are normalized to have unit variance. 

We find that there are still optimization difficulties in training \emph{deep} models using normalization and the smoothed gradient method. In particular, we find that a deeper model performs worse than its shallow counterpart in term of \emph{training accuracy} (see Appendix \ref{sec_initialization}), a phenomenon similar to \citet{he2016deep}. To fix the problem, \citet{he2016deep} proposed ResNet architecture by modifying the network using identity mapping as skip connections. According to Proposition \ref{prop:3p} (see Appendix \ref{sec_proof}), an $\ell_\infty$-dist layer can also perform identity mapping by assigning proper weights and biases at initialization, and a deeper $\ell_\infty$-dist net then can act as a shallow one by these identity mappings.

Given such findings, we can directly construct identity mappings at initialization.
Concretely, for an $\ell_\infty$-dist layer with the same input-output dimension, we first initialize the weights randomly from a standard Gaussian distribution as common, then modify the diagonal elements (i.e. $w_j^{(l,j)}$ in Definition \ref{def_ell_inf_net}) to be a large negative number $C_0$. Throughout all experiments, we set $C_0=-10$. We do not need the bias in $\ell_\infty$-dist neurons after applying mean shift normalization, and the running mean automatically makes an identity mapping.
%Finally, we do not set the diagonal elements for an $\ell_\infty$-dist layer if the number of output neurons is smaller than input neurons (e.g. the final layer).

\subsection{Weight Decay}
Weight decay is a commonly used trick in training deep neural networks. It is equivalent to adding an $\ell_2$ regularization term to the loss function. However, we empirically found that using weight decay in the $\ell_\infty$-dist net gives inferior performance (see Section \ref{sec_ablation}). The problem might be the incompatibility of weight decay ($\ell_2$ regularization) with $\ell_\infty$-norm used in  $\ell_\infty$-dist nets, as we will explain below.

Conventional networks use dot-product as the basic operation, therefore an $\ell_2$-norm constraint on the weight vectors directly controls the scale of the output magnitude. However, it is straightforward to see that the $\ell_2$-norm of the weight vector does not correspond to the output scale of the $\ell_\infty$ distance operation. A more reasonable choice is to use $\ell_\infty$-norm regularization instead of $\ell_2$-norm. In fact, we have $\|\vx-\vw\|_\infty\le \|\vx\|_\infty+\|\vw\|_\infty$, analogous to $\langle \vx,\vw\rangle \le \|\vx\|_2\|\vw\|_2$.

For general $\ell_p$-dist neurons during training, we can use $\ell_p$-norm regularization analogously. By taking derivative with respect to the weight $\vw$, we derive the corresponding weight decay formula:
\vspace{-8pt}
\begin{equation}
\label{eq:weight_decay}
    \Delta_{w_i}=-\lambda \nabla_{w_i} \|\vw\|_p^2=-\lambda \left(\frac {|w_i|}{\|\vw\|_p}\right)^{p-2} w_i
\end{equation}
where $\lambda$ is the weight decay coefficient. Note that Eqn.~\ref{eq:weight_decay} reduces to commonly used weight decay if $p=2$. When $p\rightarrow \infty$, the weight decay tends to take effects only on the element $w_i$ with the largest absolute value. %We apply weight decay for
%each weight vector $\vw^{(l,j)}$ in Definition \ref{def_ell_inf_net} separately. 
% The proposed weight decay using Eqn.~\ref{eq:weight_decay} indeed improves the performance, as will be shown in Appendix \ref{sec_ablation}.

\section{Certified Robustness by Using $\ell_\infty$-dist Nets as Robust Feature Extractors}
\label{sec_combine}
We have shown that the $\ell_\infty$-dist net is globally Lipschitz in the sense that the function is 1-Lipschitz everywhere over the input space. This constraint is pretty strong when we only require function's Lipschitzness on a specific manifold, such as real image data manifold. To make the model more flexible and fit the practical tasks better, we can build a lightweight conventional network on top of an $\ell_\infty$-dist net. The $\ell_\infty$-dist net $\vg$ will serve as a robust feature extractor, and the lightweight network $\vh$ (a shallow MLP in our experiments) will focus on task-specific goals, such as classification. We denote the composite network as $\vh\circ\vg$.

We first show how to certify the robustness for the composite network $\vh\circ\vg$. Given any input $\vx$, consider a perturbation set $\mathcal B_{\infty}^{\epsilon}(\vx)=\{\vx':\|\vx'-\vx\|_\infty\le \epsilon\}$, where $\epsilon$ is the pre-defined perturbation level. If $(\vh\circ\vg)(\vx')$ predicts the correct label $y$ for all $\vx'\in \mathcal B_{\infty}^{\epsilon}(\vx)$, we can guarantee robustness of the network $\vh\circ\vg$ for input $\vx$. Since $\vg$ is 1-Lipschitz with respect to $\ell_\infty$-norm, we have $\vg(\vx')\in \mathcal B_{\infty}^{\epsilon}(\vg(\vx))$ by definition. Based on this property, to guarantee robustness of the network $\vh\circ\vg$ for input $\vx$, it suffices to check whether for all $\vz'\in \mathcal B_{\infty}^{\epsilon}(\vg(\vx))$, $\vh(\vz')$ predicts the correct label $y$. This is equivalent to certifying the robustness for a conventional network $\vh$ given input $\vz=\vg(\vx)$, which can be calculated using any previous certification method such as convex relaxation.
We describe one of the simplest convex relaxation method named IBP \citep{gowal2018effectiveness} in Appendix \ref{sec_ibp}, which is used in our experiments. Other more advanced approaches such as CROWN-IBP \citep{zhang2020towards,xu2020automatic} can also be considered.

After obtaining the bound, we can set it as the training objective function to train the neural network parameters, similar to \citet{gowal2018effectiveness,zhang2020towards}. All calculations are differentiable, and gradient-based optimization methods can be applied. Note that since $\vh$ and $\vg$ have entirely different architectures, we apply the training strategy introduced in Section \ref{sec_optimize} to the $\ell_\infty$-dist net $\vg$ only. More details will be presented in Section \ref{sec_experiment_setting}.

\section{Experiments \& Results}
\label{sec_experiment}
In this section, we conduct extensive experiments for the proposed network. We train our models on four popular benchmark datasets: MNIST, Fashion-MNIST, CIFAR-10 and TinyImagenet.

\subsection{Experimental Setting}
\label{sec_experiment_setting}

\paragraph{Model details.} We mainly study two types of models. The first type is denoted as $\ell_\infty$-dist Net, i.e., a network consists of $\ell_\infty$-dist neurons only. The second type is a composition of $\ell_\infty$-dist Net and a shallow MLP (denoted as $\ell_\infty$-dist Net+MLP). That is, using $\ell_\infty$-dist Net as a robust feature extractor as described in Section \ref{sec_combine}. We use a 5-layer $\ell_{\infty}$-dist Net for MNIST and Fashion-MNIST, and a 6-layer $\ell_{\infty}$-dist Net for CIFAR-10 and TinyImageNet.
% \footnote{Convolutional neural networks are more popularly used for the CIFAR-10 task. To show the strength of our proposed model, we conduct experiments using convolutional $\ell_\infty$-dist Net on the CIFAR-10 task. The experimental results show that the model also outperforms all previous methods. See Appendix \ref{sec_conv} for details.}
Each hidden layer has 5120 neurons, and the top layer has 10 neurons (or 200 neurons for TinyImageNet) for classification. Normalization is applied between each intermediate layer. For $\ell_\infty$-dist Net+MLP, we remove the top layer and add a 2-layer fully connected conventional network on top of it. The hidden layer has 512 neurons with tanh activation. See Table \ref{tbl_architecture} for a complete demonstration of the models on each dataset.

\paragraph{Training configurations.} In all experiments, we train $\ell_\infty$-dist Net and $\ell_\infty$-dist Net+MLP using Adam optimizer with hyper-parameters $\beta_1=0.9$, $\beta_2=0.99$ and $\epsilon=10^{-10}$. The batch size is set to 512. For data augmentation, we use random crop (padding=1) for MNIST and Fashion-MNIST, and use random crop (padding=4) and random horizontal flip for CIFAR-10, following the common practice. For TinyImageNet dataset, we use random horizontal flip and crop each image to $56\times 56$ pixels for training, and use a center crop for testing, which is the same as \citet{xu2020automatic}. As for the loss function, we use multi-class hinge loss for $\ell_\infty$-dist Net and the IBP loss \citep{gowal2018effectiveness} for $\ell_\infty$-dist Net+MLP. The training procedure is as follows. First, we relax the $\ell_{\infty}$-dist net to $\ell_{p}$-dist net by setting $p=8$ and train the network for $e_1$ epochs. Then we gradually increase $p$ from 8 to 1000 exponentially in the next $e_2$ epochs. Finally, we set $p=\infty$ and train the last $e_3$ epochs. Here $e_1,e_2$ and $e_3$ are hyper-parameters varying from the dataset. We use $lr=0.02$ in the first $e_1$ epochs and decease the learning rate using cosine annealing for the next $e_2+e_3$ epochs. We use $\ell_p$-norm weight decay for $\ell_{\infty}$-dist nets and $\ell_2$-norm weight decay for the MLP with coefficient $\lambda=0.005$. All these explicitly specified hyper-parameters are kept fixed across different architectures and datasets. For $\ell_\infty$-dist Net+MLP training, we use the same linear warmup strategy for hyper-parameter $\epsilon_{\text{train}}$ in \citet{gowal2018effectiveness,zhang2020towards}. See Appendix \ref{sec_experiment_details} (Table \ref{tbl_hyperparameter}) for details of training configuration and hyper-parameters.

%For $\ell_\infty$-dist Net setting, we grid search the threshold parameter $t$ in hinge loss (see Section \ref{sec_optimize}). For $\ell_\infty$-dist Net+MLP setting, we grid search the coefficient $\kappa$ in the loss from \{0, 0.25, 0.5\} (see Appendix \ref{sec_loss}), and use a slightly larger $\epsilon_{train}=1.1\epsilon_{test}$ for training, as recommended by \citep{gowal2018effectiveness,zhang2020towards}

\begin{table*}[ht]
\vspace{-5pt}
\caption{Comparison of our results with existing methods\footnotemark[1].}
\small
\vspace{2pt}
\label{tbl_results}
\begin{center}
\begin{tabular}{c|cr|rrr}
\hline
Dataset                                                                                     & Method     & FLOPs & Test          & Robust        & Certified                          \\ \hline
\multirow{6}{*}{\begin{tabular}[c]{@{}c@{}}MNIST\\ ($\epsilon=0.3$)\end{tabular}}           & Group Sort~\citep{anil2019sorting}   &2.9M                      & 97.0 & 34.0 & 2.0  \\

                                                                                            & COLT~\citep{Balunovic2020Adversarial}   &4.9M                      & 97.3                              & -                                 & 85.7                             \\
                                                                                            & IBP ~\citep{gowal2018effectiveness}    &114M                      & 97.88                             & 93.22                             & 91.79                             \\
                                                                                            & CROWN-IBP~\citep{zhang2020towards}  &114M                      & 98.18                             & 93.95                             & 92.98                            \\

                                                                                            & $\ell_\infty$-dist Net     &82.7M                     & 98.54                             & 94.71                            & 92.64                             \\
                                                                                            & $\ell_\infty$-dist Net+MLP   &85.3M                     & \textbf{98.56}                        & \textbf{95.28}                             & \textbf{93.09}                            \\ \hline
\multirow{5}{*}{\begin{tabular}[c]{@{}c@{}}Fashion\\ MNIST\\ ($\epsilon=0.1$)\end{tabular}} & CAP~\citep{wong2018provable}      &0.41M                     & 78.27                             & 68.37                             & 65.47                            \\
                                                                                            & IBP~\citep{gowal2018effectiveness}      &114M                      & 84.12                             & 80.58                             & 77.67                           \\
                                                                                            & CROWN-IBP~\citep{zhang2020towards}    &114M                      & 84.31                             & 80.22                             & 78.01                           \\
                                                                                            & $\ell_\infty$-dist Net     &82.7M                     & \textbf{87.91}                             & 79.64                             & 77.48                             \\
                                                                                            & $\ell_\infty$-dist Net+MLP  &85.3M                     & \textbf{87.91}                            & \textbf{80.89}                             & \textbf{79.23}                           \\ \hline
\multirow{7}{*}{\begin{tabular}[c]{@{}c@{}}CIFAR-10\\ ($\epsilon=8/255$)\end{tabular}}      & PVT~\citep{dvijotham2018training}         &2.4M                      & 48.64                             & 32.72                             & 26.67                            \\
                                                                                            & DiffAI~\citep{mirman2019provable}  &96.3M                     & 40.2                              & -                                 & 23.2                             \\
                                                                                            & COLT~\citep{Balunovic2020Adversarial}   &6.9M                      & 51.7                              & -                                 & 27.5                             \\
                                                                                            & IBP~\citep{gowal2018effectiveness}   &151M                      & 50.99                             & 31.27                             & 29.19                            \\
                                                                                            & CROWN-IBP~\citep{zhang2020towards}   &151M                      & 45.98                             & 34.58                             & 33.06                            \\
                                                                                                                                                                                        & CROWN-IBP (loss fusion)~\citep{xu2020automatic}     &151M                      & {46.29}                            & {35.69}                            & 33.38                          \\
                                                                                            & $\ell_\infty$-dist Net     &121M                      & \textbf{56.80}                            & \textbf{37.46}                            & 33.30                          \\
                                                                                            & $\ell_\infty$-dist Net+MLP  &123M                      & 50.80                             & 37.06                             & \textbf{35.42}                            \\ \hline
\end{tabular}
\end{center}
\vspace{-12pt}
\end{table*}

\paragraph{Evaluation.} Following common practice, we test the robustness of the trained models under $\ell_\infty$-perturbation $\epsilon=0.3$ on MNIST, $\epsilon=0.1$ on Fashion-MNIST, $\epsilon=8/255$ on CIFAR-10 and $\epsilon=1/255$ on TinyImageNet. We use two evaluation metrics to measure the robustness of the model. We first evaluate the robust test accuracy under the Projected Gradient Descent (PGD) attack \citep{madry2017towards}. Following standard practice, we set the number of steps of the PGD attack to be 20. We also calculate the certified radius for each sample, and check the percentage of test samples that can be certified to be robust within the chosen radius. Note that the second metric is always lower than the first.

\paragraph{Baselines.}
We compare our proposed models with state-of-the-art methods for each dataset, including relaxation methods: CAP~\citep{wong2018provable}, PVT~\citep{dvijotham2018training}, DiffAI~\citep{mirman2019provable}, IBP~\citep{gowal2018effectiveness}, CROWN-IBP~\citep{zhang2020towards}, CROWN-IBP with loss fusion~\citep{xu2020automatic}, COLT~\citep{Balunovic2020Adversarial}, and Lipschitz networks: GroupSort~\citep{anil2019sorting}. We do not compare with randomized smoothing methods~\citep{pmlr-v97-cohen19c,salman2019provably,zhai2020macer} since it cannot obtain good certification under s relative large $\ell_\infty$-norm perturbation as discussed in Section \ref{sec_related_work}. We report the performances picked from the original papers if not specified otherwise.

\subsection{Experimental Results}
We list our results in Table~\ref{tbl_results} and Table~\ref{tbl_results_imagenet}. We use ``Standard'', ``Robust'' and ``Certified'' as abbreviations of standard (clean) test accuracy, robust test accuracy under PGD attack and certified robust test accuracy. All the numbers are reported in percentage. We use ``FLOPs'' to denote the number of basic floating-point operations needed (i.e., multiplication-add in conventional networks or subtraction in $\ell_\infty$-dist nets) in forward propagation. Note that FLOPs in linear relaxation methods are typically small because, in the original papers, they are implemented on small networks due to high costs.

\begin{table*}[ht]
\vspace{-5pt}
\caption{Comparison of our results with \citet{xu2020automatic} on TinyImageNet dataset ($\epsilon=1/255$).}
\vspace{2pt}
\small
\label{tbl_results_imagenet}
\begin{center}
\begin{tabular}{c|cr|rrr}
\hline
Method                                                                             & Model                      & \multicolumn{1}{c|}{FLOPs} & \multicolumn{1}{c}{Test} & \multicolumn{1}{c}{Robust} & \multicolumn{1}{c}{Certified} \\ \hline
\multirow{4}{*}{\begin{tabular}[c]{@{}c@{}}CROWN-IBP\\ (loss fusion)\\\citep{xu2020automatic}\end{tabular}} & CNN7+BN                    & 458M                      & 21.58                    & 19.04                      & 12.69                         \\
                                                                                   & ResNeXt                    & 64M                       & 21.42                    & 20.20                      & 13.05                         \\
                                                                                   & DenseNet                   & 575M                      & 22.04                    & 19.48                      & 14.56                         \\
                                                                                   & WideResNet                 & 5.22G                     & 27.86                    & 20.52                      & 15.86                         \\ \hline
$\ell_\infty$-dist net                                                             & $\ell_\infty$-dist Net+MLP & 156M                      & 21.82                    & 18.09                      & \textbf{16.31}                         \\ \hline
\end{tabular}
% \vspace{-10pt}
\end{center}
\end{table*}

\begin{table*}[ht]
\vspace{-5pt}
\caption{Ablation studies for Section \ref{sec_optimize} on CIFAR-10 dataset.}
\vspace{2pt}
\small
\label{tbl_ablation}
\begin{center}
\begin{tabular}{c|ccc|rrr|rrr}
\hline
\multicolumn{1}{l|}{} & \multicolumn{1}{c}{\multirow{2}{*}{\begin{tabular}[c]{@{}c@{}}Smooth\\ gradient\end{tabular}}} & \multicolumn{1}{c}{\multirow{2}{*}{\begin{tabular}[c]{@{}c@{}}Identity\\ init\end{tabular}}} & \multicolumn{1}{c|}{\multirow{2}{*}{\begin{tabular}[c]{@{}c@{}}Weight\\ decay\end{tabular}}} & \multicolumn{3}{c|}{$\ell_\infty$-dist Net}                                             & \multicolumn{3}{c}{$\ell_\infty$-dist Net+MLP}                                       \\
\multicolumn{1}{l|}{} & \multicolumn{1}{c}{}                                                                           & \multicolumn{1}{c}{}                                                                         & \multicolumn{1}{c|}{}                                                                        & \multicolumn{1}{c}{Test} & \multicolumn{1}{c}{Robust} & \multicolumn{1}{c|}{Certified} & \multicolumn{1}{c}{Test} & \multicolumn{1}{c}{Robust} & \multicolumn{1}{c}{Certified} \\ \hline
A & \xmark & \xmark & \xmark & 28.21                    & 8.42                       & 7.02                           & 37.68 &28.72 &27.76                      \\
B & \cmark & \xmark & \xmark & 55.63                    & 35.28                        & 32.56                          &  37.61 & 30.99 & 29.71                      \\

C & \cmark & \cmark & \xmark & 56.15                     & 35.96                      & 32.71                           & 48.97                    & 36.21                      & 35.02                         \\
D & \cmark & \cmark & $\ell_2$-norm & 53.64                    & 32.12                      & 29.01                          & 45.34                    & 33.48                      & 32.49                         \\
E & \cmark & \cmark & \cmark & 56.80                    & 37.46                      & 33.30                          & 50.80                    & 37.06                      & 35.42                         \\ \hline
\end{tabular}
\vspace{-10pt}
\end{center}
\end{table*}

\paragraph{General Performance of $\ell_\infty$-dist Net.} From Table \ref{tbl_results} we can see that using $\ell_\infty$-dist Net alone already achieves decent certified accuracy on all datasets. Notably, $\ell_\infty$-dist Net reaches the start-of-the-art certified accuracy on CIFAR-10 dataset while achieving a significantly higher standard accuracy than all previous methods. Note that we just use a standard loss function to train the $\ell_\infty$-dist Net \textit{without any adversarial training}.

\paragraph{General Performance of $\ell_\infty$-dist Net+MLP.} For all these datasets, $\ell_\infty$-dist Net+MLP achieves better certified accuracy than $\ell_\infty$-dist Net, establishing new state-of-the-art results. For example, on the MNIST dataset, the model can reach 93.09\% certified accuracy and 98.56\% standard accuracy; on CIFAR-10 dataset, the model can reach 35.42\% certified accuracy, which is 6.23\% higher than IBP and 2.04\% higher than the previous best result; on the TinyImageNet dataset (see Table \ref{tbl_results_imagenet}), the simple $\ell_\infty$-dist Net+MLP model (156M FLOPs computational cost) already beats the previous best result in \citet{xu2020automatic} although their model is 33 times larger than ours (5.22G FLOPs). Furthermore, the clean test accuracy of $\ell_\infty$-dist Net+MLP is also better than CROWN-IBP on MNIST, Fashion-MNIST and CIFAR-10 dataset.

\paragraph{Efficiency.} Both the training and the certification of $\ell_\infty$-dist Net are very fast. As stated in previous sections, the computational cost per iteration for training $\ell_\infty$-dist Net is roughly the same as training a conventional network of the same size, and the certification process only requires a forward pass. In Table \ref{tab:speed}, we quantitatively compare the per-epoch training speed of our method with previous methods such as IBP or CROWN-IBP on CIFAR-10 dataset. All these experiments are run on a single NVIDIA-RTX 3090 GPU. As we can see, for both $\ell_\infty$-dist Net and $\ell_\infty$-dist Net+MLP, the per-epoch training time is less than 20 seconds, which is significantly faster than CROWN-IBP and is comparable to IBP.
% For certification, it only takes about 2 seconds to compute the certified robustness on the whole test dataset.

\begin{table}[ht]
    \vspace{-5pt}
    \caption{Comparison of per-epoch training speed for different methods on CIFAR-10 dataset.}
    \vspace{2pt}
    \small
    \label{tab:speed}
    \centering
    \begin{tabular}{cc}
    \hline
        Method & Per-epoch Time (seconds)\\
    \hline
        IBP & 17.4 \\
        CROWN-IBP & 112.4 \\
        CROWN-IBP (loss fusion) & 43.3\\
        $\ell_\infty$-dist Net & 19.7\\ 
        $\ell_\infty$-dist Net+MLP & 19.7\\ 
    \hline
    \end{tabular}
    \vspace{-10pt}
\end{table}

\footnotetext[1]{For GroupSort, the results are obtained from \citet{anil2019sorting}, Fig. 8,9. For COLT, the certified result uses MILP (mixed integer linear programming) solver which is much slower than other methods. For IBP, the results are obtained from \citet{zhang2020towards}. For PVT, the certified accuracy are under perturbation $\epsilon=0.03$ which is smaller than 8/255.}

\paragraph{Discussions with GroupSort Network.} We finally make a special comparison with GroupSort, because both GroupSort network and our proposed $\ell_\infty$-dist Net design 1-Lipschitz networks explicitly and can be directly trained using a standard loss function. However, in the GroupSort network, all weight matrices $\mathbf W$ are constrained to have bounded $\ell_\infty$-norm, i.e., $\|\mathbf W\|_\infty\le 1$, leading to a time-consuming projection operation (see Appendix C in \citet{anil2019sorting}). This operation brings optimization difficulty \citep{cohen2019universal} and further limit the scalability of the network structure. We hypothesis that this is the major reason why $\ell_\infty$-dist Net substantially outperforms GroupSort on the MNIST dataset, as shown in Table \ref{tbl_results}.

% \paragraph{Comparing with IBP/CROWN-IBP.} It can be seen that $\ell_\infty$-dist Net outperforms IBP on MNIST and CIFAR-10 dataset by a comparable margin, and $\ell_\infty$-dist Net+MLP significantly outperforms IBP on all datasets. CROWN-IBP~\citep{zhang2020towards} is an improved method based on IBP, which combines linear relaxation and interval bound propagation. 
% In terms of certified accuracy, $\ell_\infty$-dist Net+MLP outperforms CROWN-IBP by 2.04\% on CIFAR-10, and by 1.22\% on Fashion-MNIST. For MNIST dataset, the gap is only 0.11\%. We hypothesize the reason behind this is that we don't leverage the fact that each pixel of the perturbed image lies in the range of [0,1], while CROWN-IBP makes use of this bounded-input constraint. Since the images in MNIST dataset are \emph{binary} and $\epsilon=0.3$ is a large number, the bounded-input constraint would be very useful in this setting. In terms of standard accuracy, $\ell_\infty$-dist Net+MLP performs much better than CROWN-IBP. We achieve 98.56\% (vs. 98.18\%) on MNIST; 87.91\% (vs. 84.31\%) on Fashion-MNIST; 51.68\% (vs. 45.98\%) on CIFAR-10. Finally, the per-epoch training speed of $\ell_\infty$-dist Net+MLP is about 4x faster than CROWN-IBP.

\subsection{Ablation Studies}
\label{sec_ablation}
In this section, we conduct ablation experiments to see the effect of smoothed approximated gradients, parameter initialization using identity map construction, and $\ell_p$-norm weight decay. The results are shown in Table \ref{tbl_ablation}. We use the model described in Section \ref{sec_experiment_setting} on CIFAR-10 dataset with hyper-parameters provided in Table \ref{tbl_hyperparameter}. From Table \ref{tbl_ablation} we can clearly see that:
\begin{itemize}[topsep=0pt]
\setlength{\itemsep}{0pt}
    \item Smoothed approximated gradient technique is crucial to train a good model for $\ell_\infty$-dist Net. After applying smoothed approximated gradient only, we can already achieve 32.56\% certified accuracy;
    \item Both smoothed approximated gradient and identity-map initialization are crucial to train a good model for $\ell_\infty$-dist Net+MLP. Combining the two techniques results in 35.02\% certified accuracy;
    \item $\ell_p$-norm weight decay can further boost the results, although the effect may be marginal (0.59\% and 0.4\% improvement of certified accuracy for the two models).
    \item Conventional $\ell_2$-norm weight decay does harm to the performance of $\ell_\infty$-dist nets.
\end{itemize}
In summary, these optimization strategies in Section \ref{sec_optimize} all contribute to the final performance of the model.

\section{Conclusion}
In this paper, we design a novel neuron that uses $\ell_\infty$ distance as its basic operation. We show that the neural network constructed with $\ell_\infty$-dist neuron is naturally a 1-Lipschitz function with respect to $\ell_\infty$ norm. This directly provides a theoretical guarantee of the certified robustness based on the margin of the prediction outputs. We further formally analyze the expressive power and the robust generalization ability of the network, and provide a holistic training strategy to handle optimization difficulties encountered in training $\ell_\infty$-dist nets. Experiments show promising results on MNIST,  Fashion-MNIST, CIFAR-10 and TinyImageNet datasets. As this structure is entirely new, plenty of aspects are needed to investigate, such as how to further handle the optimization difficulties for this network. For future work we will study these aspects and extend our model to more challenging tasks like ImageNet.

\section*{Acknowledgements}
This work was supported by National Key R\&D Program of China (2018YFB1402600), Key-Area Research and Development Program of Guangdong Province (No. 2019B121204008), BJNSF (L172037) and Beijing Academy of Artificial Intelligence. Project 2020BD006 supported by PKU-Baidu Fund.

\bibliography{ref}
\bibliographystyle{icml2021}

%%%%%%%%%%%%%%%%%%%%%%%%%%%%%%%%%%%%%%%%%%%%%%%%%%%%%%%%%%%%%%%%%%%%%%%%%%%%%%%
%%%%%%%%%%%%%%%%%%%%%%%%%%%%%%%%%%%%%%%%%%%%%%%%%%%%%%%%%%%%%%%%%%%%%%%%%%%%%%%
% DELETE THIS PART. DO NOT PLACE CONTENT AFTER THE REFERENCES!
%%%%%%%%%%%%%%%%%%%%%%%%%%%%%%%%%%%%%%%%%%%%%%%%%%%%%%%%%%%%%%%%%%%%%%%%%%%%%%%
%%%%%%%%%%%%%%%%%%%%%%%%%%%%%%%%%%%%%%%%%%%%%%%%%%%%%%%%%%%%%%%%%%%%%%%%%%%%%%%
\newpage
% \onecolumn
\appendix

\section{Proof of Theorems}
\label{sec_proof}
\subsection{Proof of Theorem \ref{thm:inflayer}}
To prove Theorem \ref{thm:inflayer}, we need the following key lemma:
\begin{lemma}\label{lem:key}
For any 1-Lipschitz function $f(\vx)$ (with respect to $\ell_{\infty}$ norm) on a bounded domain $\sK\in \mathbb R^n$, and any $\epsilon>0$, there exists a finite number of functions $f_i(\vx)$ such that for all $\vx\in \sK$
$$\max_i f_i(\vx)\le f(\vx)\le \max_i f_i(\vx)+\epsilon,$$
where each $f_i(\vx)$ has the following form 
$$f_i(\vx)=\min_{1\le j\le n}\{x_j-w^{(i)}_j,w^{(i)}_j-x_j\} +b_i.$$
\end{lemma}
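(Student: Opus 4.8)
The plan is to recognize each admissible $f_i$ as an inverted $\ell_\infty$ ``pyramid'' and to approximate $f$ from below by the upper envelope of a fine grid of such pyramids --- exactly the ``max of pyramids'' idea flagged in the proof sketch of Theorem~\ref{thm:inflayer}. First I would rewrite the prescribed form: since $\min_{1\le j\le n}\{x_j-w_j,\,w_j-x_j\}=-\max_{1\le j\le n}|x_j-w_j|=-\|\vx-\vw\|_\infty$, the claim is equivalent to finding finitely many functions $f_i(\vx)=b_i-\|\vx-\vw^{(i)}\|_\infty$ (each automatically $1$-Lipschitz with respect to $\ell_\infty$-norm) with $\max_i f_i\le f\le \max_i f_i+\epsilon$ on $\sK$.

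For the construction I would enclose the bounded set $\sK$ in an axis-aligned box $B$, extend $f$ to a $1$-Lipschitz function $\tilde f$ on $B$ (for instance by the McShane-type formula $\tilde f(\vx)=\inf_{\vy\in\sK}(f(\vy)+\|\vx-\vy\|_\infty)$, which is routine to verify), and then take $\mathbb S$ to be the finite set of nodes of a grid of mesh $\delta$ contained in $B$, with $\delta:=\epsilon$. For each node $\vz\in\mathbb S$ I set $\vw^{(\vz)}:=\vz$, $b_\vz:=\tilde f(\vz)$, and $f_\vz(\vx):=\tilde f(\vz)-\|\vx-\vz\|_\infty$; finiteness of the family is immediate since $B$ is bounded.

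The two inequalities are then short. For the lower bound, $1$-Lipschitzness of $\tilde f$ gives $\tilde f(\vz)-f(\vx)\le\|\vz-\vx\|_\infty$ for every $\vz\in\mathbb S$ and $\vx\in\sK$, hence $f_\vz(\vx)\le f(\vx)$ and therefore $\max_\vz f_\vz(\vx)\le f(\vx)$. For the upper bound, given $\vx\in\sK\subseteq B$ I pick a node $\vz^\star$ with $\|\vx-\vz^\star\|_\infty\le\delta/2$ and estimate $f_{\vz^\star}(\vx)=\tilde f(\vz^\star)-\|\vx-\vz^\star\|_\infty\ge f(\vx)-2\|\vx-\vz^\star\|_\infty\ge f(\vx)-\delta=f(\vx)-\epsilon$, so $\max_\vz f_\vz(\vx)\ge f(\vx)-\epsilon$; combining the two gives the lemma.

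The points that need care --- rather than a genuine obstacle --- are: (i) one really wants the $1$-Lipschitz \emph{extension}, because for a general bounded domain $\sK$ the nearest grid node to a point of $\sK$ need not itself lie in $\sK$, yet the upper-bound step needs $f$ (or $\tilde f$) evaluated at that node; and (ii) keeping track of the factor of two, so that mesh $\delta=\epsilon$ is the correct choice --- the deficit $f(\vx)-f_{\vz^\star}(\vx)$ collects $\|\vx-\vz^\star\|_\infty$ once from the cone term and once from the Lipschitz bound on $\tilde f(\vz^\star)$. Everything else --- the reduction of the $\min$-form to $-\|\cdot\|_\infty$, finiteness, and the lower bound --- is essentially automatic.
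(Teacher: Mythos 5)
Your proposal is correct and follows essentially the same route as the paper's proof: the same rewriting of the $\min$-form as $b_i-\|\vx-\vw^{(i)}\|_\infty$, the same grid of ``pyramid'' functions $f_{\vz}(\vx)=f(\vz)-\|\vx-\vz\|_\infty$, and the same two Lipschitz estimates for the lower and upper envelopes. The one place you go beyond the paper is the McShane extension of $f$ to an enclosing box: the paper instead restricts the grid to $\sS\cap\sK$ and asserts that every $\vx\in\sK$ has a grid neighbour \emph{inside} $\sK$ within $\epsilon/2$, which is not automatic for an arbitrary bounded domain, so your extension step cleanly closes that small gap.
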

\begin{proof}[Proof of Lemma \ref{lem:key}]
Without loss of generality we may assume $\sK\in [0,1]^n$. Consider the set $\sS$ consisting of all points $\frac{\epsilon}{2}(N_1,...,N_n)$ where $N_j$ are integers, we can write $\sS\cap \sK=\{\vw^{(1)},...,\vw^{(m)} \}$ since it's a finite set. $\forall \vw^{(i)} \in \sS\cap \sK$, we define the corresponding $f_i(\vx)$ as follows
\begin{equation}\label{eq:lem1}
\begin{aligned}
f_i(\vx)&=\min_{1\le j \le n} \{x_j-w^{(i)}_j,w^{(i)}_j-x_j\} +f(\vw^{(i)})\\
&=-\|\vx-\vw^{(i)}\|_\infty+f(\vw^{(i)})
\end{aligned}
\end{equation}
On the one hand, we have $f(\vx)-f_i(\vx)=f(\vx)-f(\vw^{(i)})+\|\vx-\vw^{(i)}\|_\infty\ge -\|\vx-\vw^{(i)}\|_\infty+\|\vx-\vw^{(i)}\|_\infty=0$, therefore $\forall \vx \in \sK$ we have $f(\vx)\ge f_i(\vx)$, namely $\max_i f_i(\vx)\le f(\vx)$ holds.

On the other hand, $\forall \vx \in \sK$ there exists its 'neighbour' $\vw^{(j)}\in \sS\cap \sK$ such that $\|\vx-\vw^{(j)}\|_{\infty}\le \frac{\epsilon}{2}$, therefore by using the Lipschitz properties of both $f(\vx)$ and $f_j(\vx)$, we have that
\begin{equation}\label{eq:lem2}
\begin{aligned}
f(\vx)&\le f(\vw^{(j)})+\frac{\epsilon}{2}= f_j(\vw^{(j)})+\frac{\epsilon}{2}\\
&\le f_j(\vx)+\epsilon\le \max_i f_i(\vx)+\epsilon
\end{aligned}
\end{equation}
Combining the two inequalities concludes our proof.
\end{proof}
Lemma \ref{lem:key} ``decomposes'' any target 1-Lipschitz function into simple ``base functions'', which will serve as building blocks in proving the main theorem. We are ready to prove Theorem \ref{thm:inflayer}:

\begin{proof}[Proof of Theorem \ref{thm:inflayer}]
By Lemma \ref{lem:key}, there exists a finite number of functions $f_i(\vx)$ ($i=1,...,m$) such that $\forall \vx\in \sK$
\begin{equation}\label{eq:key2}
\max_i f_i(\vx)\le \tilde{g}(\vx)\le \max_i f_i(\vx)+\epsilon
\end{equation}

where each $f_i(\vx)$ has the form 
\begin{equation}\label{eq:formthm2}
f_i(\vx)=\min_{1\le j\le n}\{x_j-w^{(i)}_j,w^{(i)}_j-x_j\} +\tilde{g}(\vw^{(i)})
\end{equation}

The high-level idea of the proof is very simple: among width $d_{\mathrm{input}}+2$, we allocate $d_{\mathrm{input}}$ neurons each layer to keep the information of $\vx$, one to calculate each $f_i(\vx)$ one after another and the last neuron calculating the maximum of $f_i(\vx)$ accumulated.

To simplify the proof, we would first introduce three general basic maps which can be realized at a single unit, then illustrate how to represent $\max_i f_i(\vx)$ by combing these basic maps.

Let's assume for now that any input to any unit in the whole network has its $\ell_{\infty}$-norm upper bounded by a large constant $C$, we will come back later to determine this value and prove its validity.

\begin{proposition}\label{prop:3p}
$\forall j,k,p$ and constant $w, c$, the following base functions are realizable at the $k$th unit in the $l$th hidden layer:

\textit{1, the projection map:}
\begin{equation}\label{eq:p1}
u(\vx^{(l)},\theta^{(l,k)})=x^{(l)}_j+c
\end{equation}

\textit{2, the negation map:}
\begin{equation}\label{eq:p2}
u(\vx^{(l)},\theta^{(l,k)})=-x^{(l)}_j+c
\end{equation}

\textit{3, the maximum map:}
\begin{equation}\label{eq:p3}
\begin{aligned}
u(\vx^{(l)},\theta^{(l,k)})&=\max \{x^{(l)}_j+w,x^{(l)}_p\}+c\\ u(\vx^{(l)},\theta^{(l,k)})&=\max \{-x^{(l)}_j+w,x^{(l)}_p\}+c
\end{aligned}
\end{equation}
\end{proposition}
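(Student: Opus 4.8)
The plan is to realize each of the three base maps by writing down an explicit weight vector $\vw^{(l,k)}$ and bias $b^{(l,k)}$, and exploiting the standing assumption that every vector fed into a unit has $\ell_\infty$-norm at most $C$. The guiding observation is this: if all but one (or two) coordinates of $\vw^{(l,k)}$ are set to $0$ while the remaining one (or two) coordinates are given a value of very large magnitude, then in $\|\vx^{(l)}-\vw^{(l,k)}\|_\infty$ the maximum over coordinates is attained on exactly those distinguished coordinates, and moreover the corresponding absolute value resolves with a fixed sign. Thus the $\ell_\infty$-distance collapses to an affine function (or a maximum of two affine functions) of the coordinates of $\vx^{(l)}$, and the leftover additive constant is absorbed into $b^{(l,k)}$.

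For the projection map \eqref{eq:p1}, I would take $w^{(l,k)}_j=-C'$, $w^{(l,k)}_i=0$ for $i\neq j$, and $b^{(l,k)}=c-C'$, with $C'$ a large constant. Since $|x^{(l)}_j-w^{(l,k)}_j|=x^{(l)}_j+C'\ge C'-C$ whereas $|x^{(l)}_i-w^{(l,k)}_i|=|x^{(l)}_i|\le C$ for $i\neq j$, choosing $C'\ge 2C$ forces $\|\vx^{(l)}-\vw^{(l,k)}\|_\infty=x^{(l)}_j+C'$, hence $u(\vx^{(l)},\theta^{(l,k)})=x^{(l)}_j+c$. The negation map \eqref{eq:p2} is the mirror image: set $w^{(l,k)}_j=C'$, keep $w^{(l,k)}_i=0$ for $i\neq j$, and $b^{(l,k)}=c-C'$, so that $|x^{(l)}_j-w^{(l,k)}_j|=C'-x^{(l)}_j$ dominates and $u(\vx^{(l)},\theta^{(l,k)})=-x^{(l)}_j+c$.

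For the maximum map \eqref{eq:p3} I would use two distinguished coordinates (taking $j\neq p$, which is the only case needed later). For the first variant, put $w^{(l,k)}_j=-C'-w$, $w^{(l,k)}_p=-C'$, $w^{(l,k)}_i=0$ for $i\notin\{j,p\}$, and $b^{(l,k)}=c-C'$; for $C'$ large enough in terms of $C$ and $w$, the absolute values on coordinates $j$ and $p$ are both nonnegative and at least $C$, so they dominate, giving $\|\vx^{(l)}-\vw^{(l,k)}\|_\infty=\max\{x^{(l)}_j+C'+w,\ x^{(l)}_p+C'\}=C'+\max\{x^{(l)}_j+w,\ x^{(l)}_p\}$, which is the claimed identity after the shift by $b^{(l,k)}$. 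The second variant follows by flipping the sign of the $j$-th weight: take $w^{(l,k)}_j=C'+w$ (so $|x^{(l)}_j-w^{(l,k)}_j|=C'+w-x^{(l)}_j$), keep $w^{(l,k)}_p=-C'$ and the rest $0$, and again $b^{(l,k)}=c-C'$, producing $\max\{-x^{(l)}_j+w,\ x^{(l)}_p\}+c$.

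The only genuinely delicate point — and the main obstacle — is the bookkeeping behind ``the distinguished coordinates dominate and their absolute values resolve with a definite sign'': one has to verify, using $x^{(l)}_i\in[-C,C]$, that the relevant quantities such as $x^{(l)}_j+C'+w$ are nonnegative and are no smaller than every competing $|x^{(l)}_i-w^{(l,k)}_i|$, which is precisely what dictates how large $C'$ must be taken relative to $C$, $w$ and $c$. This is also why the boundedness hypothesis is indispensable; the remainder of the proof of Theorem~\ref{thm:inflayer} must separately guarantee that such a uniform bound $C$ exists along the entire constructed network, so that the hypothesis invoked here is legitimate.
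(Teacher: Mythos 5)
Your construction is essentially identical to the paper's own proof: the paper realizes each base map by placing a single large-magnitude entry (specifically $\pm 2C$, or $\pm(w+2C)$ together with $2C$ for the maximum map) in the distinguished coordinate(s) of $\vw^{(l,k)}$, zeros elsewhere, and a bias of $-2C+c$, exactly the scheme you describe with your generic constant $C'$. Your added care about how large $C'$ must be relative to $C$, $w$, $c$, and about deferring the existence of the uniform bound $C$ to the surrounding proof of Theorem~\ref{thm:inflayer}, matches what the paper does (it fixes $C=2C_1+C_2$ at the end and leaves the verification to the reader), so the proposal is correct and takes the same route.
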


\begin{proof}[Proof of Proposition \ref{prop:3p}]
\textit{1, the projection map:}
Setting $u(\vx^{(l)},\theta^{(l,k)})$ as follows
\begin{equation*}
u(\vx^{(l)},\theta^{(l,k)})=\|(x^{(l)}_1,...,x^{(l)}_j+2C,...,x^{(l)}_n)\|_{\infty}-2C+c
\end{equation*}

\textit{2, the negation map:}
Setting $u(\vx^{(l)},\theta^{(l,k)})$ as follows
\begin{equation*}
u(\vx^{(l)},\theta^{(l,k)})=\|(x^{(l)}_1,...,x^{(l)}_j-2C,...,x^{(l)}_n)\|_{\infty}-2C+c
\end{equation*}

\textit{3, the maximum map:}
Setting $u(\vx^{(l)},\theta^{(l,k)})$ as follows
\begin{equation*}
\begin{aligned}
&u(\vx^{(l)},\theta^{(l,k)})\\
=&\|(x^{(l)}_1,...,x^{(l)}_j+w+2C,...,x^{(l)}_p+2C,...,x^{(l)}_n)\|_{\infty}\\
&-2C+c
\end{aligned}
\end{equation*}
\begin{equation*}
\begin{aligned}
&u(\vx^{(l)},\theta^{(l,k)})\\
=&\|(x^{(l)}_1,...,x^{(l)}_j-w-2C,...,x^{(l)}_p+2C,...,x^{(l)}_n)\|_{\infty}\\
&-2C+c
\end{aligned}
\end{equation*}
We finish the proof of Proposition \ref{prop:3p}.
\end{proof}

With three basic maps in hand, we are prepared to construct our network. Using proposition \ref{prop:3p}, the first hidden layer realizes $u(\vx,\theta^{(1,k)})=x_k$ for $k=1,...,d_{\mathrm{input}}$. The rest two units can be arbitrary, we set both to be $x_1$.

By proposition \ref{prop:3p}, throughout the whole network, we can set $u(\vx^{(l)},\theta^{(l,k)})=x_k$ for all $l$ and $k=1,...,n$. Notice that $f_i(\vx)$ can be rewritten as
\begin{equation}\label{eq:rec1}
\begin{aligned}
f_i(\vx)=-\max\{x_1-w^{(i)}_1,\max\{w^{(i)}_1-x_1,\\
\max\{...,w^{(i)}_n-x_n \}...\}\} +\tilde{g}(\vw^{(i)})
\end{aligned}
\end{equation}

Using the maximum map recurrently while keeping other units unchanged with the projection map, we can utilize the unit $u(\vx^{(l)},\theta^{(l,d_{\mathrm{input}}+1)})$ to realize one $f_i(\vx)$ at a time. Again by the use of maximum map, the last unit $u(\vx^{(l)},\theta^{(l,d_{\mathrm{input}}+2)})$ will recurrently calculate (initializing with $\max\{f_1(\vx)\}=f_1(\vx)$)
\begin{equation}\label{eq:rec2}
\max_i f_i(\vx)=\max\{f_m(\vx),\max\{...,\max\{f_1(\vx)\}...\}\}
\end{equation}

using only finite depth, say $L$, then the network outputs $g(\vx)=u(\vx^{(L)},\theta^{(L,1)})=u(\vx^{(L-1)},\theta^{(L-1,d_{\mathrm{input}}+2)})=\max_i f_i(\vx)$ as desired. We are only left with deciding a valid value for $C$. Because $\sK$ is bounded and $\tilde{g}(\vx)$ is continuous, there exists constants $C_1,C_2$ such that $\forall \vx\in \sK$, $\|\vx\|_{\infty}\le C_1$ and $|\tilde{g}(\vx)|\le C_2$, it's easy to verify that $C=2C_1+C_2$ is valid.
\end{proof}

\subsection{Proof of Theorem \ref{thm:rob}}
We prove the theorem in two steps. One step is to provide a margin bound to control the gap between standard training error and standard test error using Rademacher complexity. Next step is to bound the gap between test error and robust test error. We will first give a quick revisit on Rademacher complexity and its properties, then provide two lemmas corresponding to the two steps for the proof.

\paragraph{Rademacher Complexity}
Given a sample $X_n=\{\vx_1,...,\vx_n\}\in \sK^n$, and a real-valued function class $\sF$ on $\sK$, the Rademacher complexity of $\sF$ is defined as
$$R_n(\sF)=\mathbb{E}_{X_n}\rbr{\frac{1}{n}\mathbb{E}_{\sigma}\sbr{\sup_{f\in \sF} \sum_{i=1}^n \sigma_i f(\vx_i)}}$$    
where $\sigma_i$ are drawn from the Rademacher distribution independently, i.e. $\mathbb P(\sigma_i=1)=\mathbb P(\sigma_i=-1)=\frac{1}{2}$. It's worth noting that for any constant function $r$, $R_n(\sF)=R_n(\sF \oplus r)$ where $\sF\oplus r=\{f+r|f\in \sF\}$.

Rademacher complexity is directly related to generalization ability, as shown in Lemma \ref{lem:rade}:

\begin{lemma}\label{lem:rade}(Theorem 11 in \citet{koltchinskii2002empirical})
Let $\sF$ be a real-valued hypothesis class. For all $t>0$,
\begin{equation*}
\begin{aligned}
\mathbb P\left(\exists g\in \sF: \mathbb{E}_{(\vx, y)\sim \mathcal D}\sbr{\mathbb{I}_{y g(\vx)\le 0}}>\Phi(g)\right) \le 2e^{-2t^2}.
\end{aligned}
\end{equation*}
where
\begin{equation*}
\begin{aligned}
\Phi(g)=\inf_{\delta\in (0,1]}\left[\frac{1}{n}\sum_{i=1}^n \mathbb{I}_{y_i g(\vx_i)\le \delta}+\frac{48}{\delta}R_n(\sF)+\right.\\
\left.\rbr{\frac{\log\log_2(\frac{2}{\delta})}{n}}^{\frac{1}{2}}\right]+\frac{t}{\sqrt{n}}.
\end{aligned}
\end{equation*}
\end{lemma}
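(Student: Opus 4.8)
The plan is to follow the classical margin-based Rademacher argument of \citet{koltchinskii2002empirical}: replace the discontinuous $0/1$ loss $\mathbb I_{yg(\vx)\le 0}$ by a Lipschitz \emph{ramp} surrogate, control its generalization gap for a \emph{fixed} margin $\delta$ via symmetrization and the contraction lemma, and then make the bound uniform over all $\delta\in(0,1]$ by a dyadic peeling (stratification) argument. The ramp surrogate is the key device: it simultaneously dominates the $0/1$ loss and is dominated by the empirical $\delta$-margin indicator, so sandwiching it produces exactly the two margin terms appearing in $\Phi(g)$.

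First I would fix $\delta\in(0,1]$ and define the ramp $\phi_\delta(u)=1$ for $u\le 0$, $\phi_\delta(u)=1-u/\delta$ for $0<u<\delta$, and $\phi_\delta(u)=0$ for $u\ge\delta$, which is $1/\delta$-Lipschitz and satisfies $\mathbb I_{u\le 0}\le \phi_\delta(u)\le \mathbb I_{u\le\delta}$. Writing $Z_\delta=\sup_{g\in\sF}\rbr{\mathbb E_{(\vx,y)\sim\mathcal D}\sbr{\phi_\delta(yg(\vx))}-\frac1n\sum_{i=1}^n\phi_\delta(y_ig(\vx_i))}$, changing one sample alters $Z_\delta$ by at most $1/n$, so McDiarmid's bounded-difference inequality gives $\mathbb P(Z_\delta> \mathbb E Z_\delta + t/\sqrt n)\le e^{-2t^2}$. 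Standard symmetrization yields $\mathbb E Z_\delta\le 2R_n(\phi_\delta\circ\sF)$, and since each $y_i\in\{-1,1\}$ only flips a Rademacher sign it does not change the complexity; the contraction lemma then gives $R_n(\phi_\delta\circ\sF)\le \tfrac1\delta R_n(\sF)$. Combining these with the sandwich inequalities shows that, for this fixed $\delta$, with probability at least $1-e^{-2t^2}$, $\mathbb E[\mathbb I_{yg\le 0}]\le \tfrac1n\sum_i\mathbb I_{y_ig(\vx_i)\le\delta}+\tfrac{c}{\delta}R_n(\sF)+t/\sqrt n$ for all $g\in\sF$ simultaneously, with $c$ an absolute constant coming from the symmetrization and contraction factors.

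Next I would upgrade this to hold uniformly in $\delta$. I would restrict to the dyadic grid $\delta_j=2^{-j}$, $j\ge 1$, and allocate to index $j$ a failure probability proportional to $j^{-2}$, i.e.\ run the fixed-margin step at level $\delta_j$ with confidence parameter $t_j$ chosen so that $t_j^2=t^2+\log j+O(1)$; then $\sum_j 2e^{-2t_j^2}\le 2e^{-2t^2}$ because $\sum_j j^{-2}<\infty$. For an arbitrary $\delta\in(0,1]$ I would pick the grid point with $\delta_j\le\delta<2\delta_j$, so that $\tfrac1n\sum_i\mathbb I_{y_ig\le\delta_j}\le \tfrac1n\sum_i\mathbb I_{y_ig\le\delta}$ and $\tfrac1{\delta_j}\le\tfrac2\delta$; and since $j\asymp\log_2(1/\delta)$, the penalty $t_j/\sqrt n\le t/\sqrt n+\sqrt{(\log j+O(1))/n}$ produces exactly the $\bigl(\log\log_2(2/\delta)/n\bigr)^{1/2}$ term. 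Taking the infimum over $\delta\in(0,1]$ then assembles $\Phi(g)$.

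The main obstacle is the uniformization step rather than the fixed-margin inequality. The delicate points are: allocating the per-level confidence budget across the infinite dyadic grid so that the total failure probability stays at $2e^{-2t^2}$ while the extracted penalty is precisely the slowly growing $\sqrt{\log\log_2(2/\delta)/n}$ (this is what forces the double logarithm rather than a single $\log$); and tracking every multiplicative constant --- the factor $2$ from symmetrization, the Lipschitz factor $1/\delta$ and any factor from the version of the contraction lemma used, together with the doubling $1/\delta_j\le 2/\delta$ incurred when rounding $\delta$ to the grid --- so that they combine into the stated constant $48$. I would reconcile these constants against the original statement of Theorem~11 in \citet{koltchinskii2002empirical}.
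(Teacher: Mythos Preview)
The paper does not prove this lemma at all: it simply states it and cites it verbatim as Theorem~11 of \citet{koltchinskii2002empirical}, then immediately moves on to use it. Your proposal is a faithful sketch of the standard argument behind that theorem --- ramp surrogate, McDiarmid plus symmetrization plus contraction for a fixed margin, then dyadic stratification over $\delta$ --- but there is no in-paper proof to compare it against; the paper treats this as a black-box citation rather than something requiring justification here.
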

Lemma \ref{lem:rade} can be generalized to the following lemma:
\begin{lemma}\label{lem:rader}
Let $\sF$ be a real-valued hypothesis class, define the $r$-margin test error $\beta_r(g)$ as: $\beta_r(g)=\mathbb{E}_{{(\vx, y)\sim \mathcal D}}\sbr{\mathbb{I}_{y g(\vx)\le r}}$, then for any $t>0$,
$$\mathbb{P}\rbr{\exists g\in \sF: \beta_r(g)>\Phi_r(g)} \le 2e^{-2t^2}.$$
where
\begin{equation*}
\begin{aligned}
\Phi_r(g)=\inf_{\delta\in (0,1]}\left[\frac{1}{n}\sum_{i=1}^n \mathbb{I}_{y_i g(\vx_i)\le \delta+r}+\frac{48}{\delta}R_n(\sF)+\right.\\
\left.\rbr{\frac{\log\log_2(\frac{2}{\delta})}{n}}^{\frac{1}{2}}\right]+\frac{t}{\sqrt{n}}.
\end{aligned}
\end{equation*}
\end{lemma}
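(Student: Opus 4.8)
The plan is to derive Lemma~\ref{lem:rader} from Lemma~\ref{lem:rade} by a relabeling trick, avoiding a second pass through the ramp-loss/Rademacher machinery. Since $y\in\{-1,1\}$, the event $\{yg(\vx)\le r\}$ coincides with $\{\tilde y\,\tilde g(\tilde\vx)\le 0\}$ once we set the augmented input $\tilde\vx=(\vx,y)$, the augmented label $\tilde y\equiv 1$, and $\tilde g(\vx,y)=yg(\vx)-r$. First I would introduce the augmented class $\tilde{\sF}=\{\tilde g:g\in\sF\}$ of real-valued functions on $\sK\times\{-1,1\}$, equipped with the augmented distribution under which $(\vx,y)\sim\mathcal D$ and the new label is deterministically $1$, and apply Lemma~\ref{lem:rade} to $\tilde{\sF}$ with the i.i.d.\ samples $((\vx_i,y_i),1)$. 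This would give, with probability at least $1-2e^{-2t^2}$ and simultaneously for all $g\in\sF$,
\begin{equation*}
\E_{(\vx,y)\sim\mathcal D}\sbr{\mathbb{I}_{yg(\vx)\le r}}\le \inf_{\delta\in(0,1]}\sbr{\frac1n\sum_{i=1}^n \mathbb{I}_{y_ig(\vx_i)-r\le\delta}+\frac{48}{\delta}R_n(\tilde{\sF})+\rbr{\frac{\log\log_2(2/\delta)}{n}}^{1/2}}+\frac{t}{\sqrt n},
\end{equation*}
and since the left-hand side is $\beta_r(g)$ and $\mathbb{I}_{y_ig(\vx_i)-r\le\delta}=\mathbb{I}_{y_ig(\vx_i)\le\delta+r}$, this is precisely $\beta_r(g)\le\Phi_r(g)$ provided $R_n(\tilde{\sF})=R_n(\sF)$.

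The second step is to check this last identity. Expanding, $R_n(\tilde{\sF})=\E\sbr{\frac1n\E_\sigma\sbr{\sup_{g\in\sF}\sum_{i=1}^n\sigma_i(y_ig(\vx_i)-r)}}$; the constant $-r$ term contributes $-r\sum_i\sigma_i$, which does not depend on $g$, so it splits off from the supremum and has zero expectation over the $\sigma_i$, and each $\sigma_iy_i$ is, conditionally on the sample, again a Rademacher variable, so replacing $\sigma_iy_i$ by $\sigma_i$ leaves the inner expectation equal to $\E_\sigma[\sup_g\sum_i\sigma_ig(\vx_i)]$. The outer expectation then only involves the $\vx_i$, so $R_n(\tilde{\sF})=R_n(\sF)$. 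This is the same kind of invariance already recorded in the text, namely $R_n(\sF)=R_n(\sF\oplus r)$ for a constant $r$.

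The one point I would be careful about is that Lemma~\ref{lem:rade} must legitimately apply to $\tilde{\sF}$: it is a real-valued class, now on the product domain $\sK\times\{-1,1\}$, the samples $((\vx_i,y_i),1)$ are i.i.d.\ from the augmented distribution, and a deterministic label is a harmless special case. Once that is granted, the $\inf$ over $\delta\in(0,1]$, the constant $48$, and the failure probability $2e^{-2t^2}$ all transfer verbatim from Lemma~\ref{lem:rade}. I do not anticipate a genuinely hard step here; the entire argument is bookkeeping around the relabeling plus the short Rademacher computation. An alternative, more self-contained route would be to re-run the proof of Lemma~\ref{lem:rade} with the ramp surrogate $\phi_\delta(\cdot-r)$ in place of $\phi_\delta$, which is still $1/\delta$-Lipschitz and sandwiched between $\mathbb{I}_{\cdot\le r}$ and $\mathbb{I}_{\cdot\le r+\delta}$; but the reduction above is cleaner.
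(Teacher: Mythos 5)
Your argument is correct and rests on exactly the same idea as the paper's proof: shift the margin functional by the constant $r$ and use the invariance of Rademacher complexity under adding a constant (the fact $R_n(\sF)=R_n(\sF\oplus r)$ that the paper records just before Lemma~\ref{lem:rade}), the paper phrasing this as replacing $m_f$ by $m_f-r$ inside the proof of Theorem~11 of \citet{koltchinskii2002empirical} while you package it as a black-box application of Lemma~\ref{lem:rade} to the shifted margin class. The reduction and the computation $R_n(\tilde{\sF})=R_n(\sF)$ are both sound, so this matches the paper's proof in substance.
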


\begin{proof}[Proof of Lemma \ref{lem:rader}]
To further generalize Lemma \ref{lem:rade} to $\beta_r(g)$ with $r>0$, we use the fact that Rademacher complexity remain unchanged if the same constant $r$ is added to all functions in $\sF$. Lemma \ref{lem:rader} is a direct consequence by replacing $m_f$ by $m_f-r$ at the end of the proof of Theorem 11 in \cite{koltchinskii2002empirical}, where it plugs $m_f$ into Theorem 2 in \cite{koltchinskii2002empirical}.
\end{proof}

Also, it's well known (using Massart's Lemma) that Rademacher complexity can be bounded by VC dimension:

\begin{equation}\label{eq:radevc}
R_n(\sF)\le   \sqrt{\frac{2VCdim(\sF)\log\frac{en}{VCdim(\sF)}}{n}}
\end{equation}

We next bound the the gap between test error and robust test error. We have

\begin{lemma}\label{lem:robust}
Assume function $g$ is 1-Lipschitz (with respect to $\ell_\infty$-norm. The $r$-robust test error $\gamma_r(g)$ is no larger than the $r$-margin test error $\beta_r(g)$, i.e., $\gamma_r(g)\le \beta_r(g)$.
\end{lemma}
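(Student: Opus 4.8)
The plan is to reduce the claim to a pointwise inequality inside the two expectations and then invoke monotonicity of expectation. Concretely, I would show that for every fixed pair $(\vx,y)$ with $y\in\{1,-1\}$,
$$\sup_{\|\vx'-\vx\|_\infty\le r}\mathbb I_{yg(\vx')\le 0}\ \le\ \mathbb I_{yg(\vx)\le r},$$
after which taking $\mathbb E_{(\vx,y)\sim\mathcal D}$ of both sides immediately yields $\gamma_r(g)\le\beta_r(g)$ by the definitions of the $r$-robust test error and the $r$-margin test error.

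To establish the pointwise inequality, first note it is trivial when the left-hand side equals $0$, since the right-hand side is always in $\{0,1\}$. So assume the left-hand side equals $1$. Because the supremand is $\{0,1\}$-valued, this is equivalent to the existence of some $\vx'$ with $\|\vx'-\vx\|_\infty\le r$ and $yg(\vx')\le 0$ (which also sidesteps any attainment or measurability subtleties about the supremum). By the $1$-Lipschitz assumption on the scalar-valued $g$ with respect to $\ell_\infty$-norm, $|g(\vx)-g(\vx')|\le\|\vx-\vx'\|_\infty\le r$. Since $|y|=1$, multiplying by $y$ preserves absolute value, so $|yg(\vx)-yg(\vx')|\le r$, and therefore $yg(\vx)\le yg(\vx')+r\le r$. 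Hence $\mathbb I_{yg(\vx)\le r}=1$, which proves the pointwise bound; integrating over $(\vx,y)\sim\mathcal D$ completes the argument.

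I do not expect a genuine obstacle here: the only ingredient is that the Lipschitz bound controls the variation of $g$ over an $\ell_\infty$-ball of radius $r$, so a sign change of $g$ (from $yg(\vx')\le 0$) within that ball forces $g(\vx)$ to already lie within $r$ of the decision boundary, i.e. $yg(\vx)\le r$. The label variable $y\in\{1,-1\}$ is handled uniformly just by noting $|y|=1$, so no case split on the sign of $y$ is actually needed.
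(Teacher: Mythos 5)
Your proof is correct and is essentially the paper's own argument: the paper establishes the contrapositive pointwise implication ($yg(\vx)>r$ forces $yg(\vx')>0$ throughout the $\ell_\infty$-ball) and concludes $1-\beta_r\le 1-\gamma_r$, while you prove the equivalent direct implication and conclude $\gamma_r\le\beta_r$. The single ingredient in both is the same Lipschitz control of $g$ over the radius-$r$ ball, so there is nothing further to add.
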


\begin{proof}[Proof of Lemma \ref{lem:robust}] Since $g(\vx)$ is 1-Lipschitz, then
$yg(\vx)> r$ implies that $\inf_{\|x'-x\|_{\infty}\le r} yg(\vx')> 0$, therefore 
$$\mathbb{E}_{\mathcal{D}}\sbr{\mathbb{I}_{y g(\vx)> r}}\le \mathbb{E}_{\mathcal{D}}\sbr{\inf_{\|x'-x\|_{\infty}\le r}\mathbb{I}_{y g(\vx')> 0}},$$ which implies $1-\beta_r\le 1-\gamma_r$.
\end{proof}

Now we are ready to prove Theorem \ref{thm:rob}. Based on Lemma \ref{lem:rader}, Lemma \ref{lem:robust} and Eqn.~\ref{eq:radevc}, it suffices to bound the VC dimension of an $\ell_\infty$-dist net. We will bound the VC dimension of an $\ell_{\infty}$-dist net by reducing it to a fully-connected ReLU network. We first introduce the VC bound for fully-connected neural networks with ReLU activation borrowed from \cite{JMLR:v20:17-612}:

\begin{lemma}\label{lem:vc1}{(Theorem 6 in \cite{JMLR:v20:17-612})}
Consider a fully-connected ReLU network architecture $F$ with input dimension $d$, width $w\ge d$ and depth (number of hidden layers) $L$, then its VC dimension satisfies:
\begin{equation}\label{eq:vc1}
VCdim(F)=\tilde{O}(L^2 w^2)   
\end{equation}
\end{lemma}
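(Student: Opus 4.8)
The plan is to reduce the statement to a bound that is near-linear in the number of trainable parameters and then merely count parameters. The heavy lifting is provided by the general VC-dimension bound for piecewise-linear networks: any feedforward network with ReLU activations, $W$ real parameters and $L$ layers satisfies $VCdim = O(WL\log W)$. Granting this, it remains to observe that a fully-connected architecture $F$ of width $w$ and depth $L$ with input dimension $d\le w$ has $W=\sum_{\ell=1}^{L+1}(d_{\ell-1}d_\ell+d_\ell)\le (L+1)(w+1)w = O(Lw^2)$ parameters, where $d_0=d$ and each $d_\ell\le w$. Substituting $W=O(Lw^2)$ into $O(WL\log W)$ gives $O(L^2w^2\log(Lw^2))=\tilde{O}(L^2w^2)$, which is exactly the claimed bound; this is essentially the adaptation of Theorem 6 of \cite{JMLR:v20:17-612} to the fully-connected regime.

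For completeness I would also indicate why $VCdim=O(WL\log W)$ holds, since the cruder classical estimate $VCdim=O(W^2)$ is not good enough here (it only yields $\tilde{O}(L^2w^4)$). Fix $m$ points $\vx_1,\dots,\vx_m$ and view the map $\vtheta\mapsto(f(\vx_1;\vtheta),\dots,f(\vx_m;\vtheta))$ as a function on parameter space $\R^W$. Proceeding layer by layer, one maintains a partition of $\R^W$ such that inside each cell every pre-activation (across all $m$ inputs) is a polynomial in $\vtheta$ whose degree is at most the current depth; passing through layer $k+1$ introduces at most $m\cdot d_{k+1}$ new ReLU sign conditions, each the sign of a bounded-degree polynomial that depends only on the $O(\sum_{j\le k}w^2)$ parameters of the first $k$ layers, and a Warren/Milnor--Thom-type bound on the number of sign patterns controls how much the partition must be refined. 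Multiplying the refinement factors over all $L$ layers and finally accounting for the sign patterns of the $m$ degree-$\le L$ outputs within a cell bounds the growth function $\Pi_F(m)$ polynomially in $m$ with exponent $O(WL)$; setting $\Pi_F(m)=2^m$ and solving the resulting inequality yields $m=O(WL\log W)$.

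The main obstacle is precisely this last bookkeeping. A single application of Warren's theorem to all the sign conditions simultaneously only sees $W$ variables and gives the $O(W^2)$ estimate, so the near-optimal rate requires exploiting that each layer-$k$ condition involves only the parameters of the first $k$ layers and has polynomial degree $O(k)$ rather than $O(L)$, together with the weighted arithmetic-mean/geometric-mean manipulation needed to turn a product of the form $\prod_\ell(\cdots)^{W_\ell}$ with $\sum_\ell W_\ell=W$ into a clean $m^{O(WL)}$ bound on $\Pi_F(m)$. By contrast, the parameter count $W=O(Lw^2)$ and the passage from the growth function to the VC dimension via the Sauer--Shelah inequality are routine.
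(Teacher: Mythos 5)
The paper offers no proof of this lemma at all---it is imported verbatim as Theorem~6 of the cited JMLR reference---so there is no in-paper argument to compare against. Your proposal is correct: the parameter count $W=O(Lw^2)$ combined with the $O(WL\log W)$ VC bound for piecewise-linear networks does yield $\tilde{O}(L^2w^2)$ (and you are right that the cruder $O(W^2)$ estimate would only give $\tilde{O}(L^2w^4)$), while your sketch of why $O(WL\log W)$ holds---layer-by-layer cell refinement of parameter space via Warren's theorem, exploiting that layer-$k$ sign conditions involve only the first $k$ layers' parameters and polynomials of degree $O(k)$, followed by Sauer--Shelah---faithfully reproduces the actual argument of that reference.
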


The following lemma shows how to calculate $\ell_{\infty}$-distance using a fully-connected ReLU network architecture.

\begin{lemma}\label{lem:vc}
$\forall \vw \in R^d$, there exists a fully-connected ReLU network $h$ with width $O(d)$ and depth $O(\log d)$ such that $h(\vx)=\|\vx-\vw\|_{\infty}$.
\end{lemma}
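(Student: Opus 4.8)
The plan is to construct $h$ explicitly as a small ReLU network mirroring the identity
$\|\vx-\vw\|_\infty=\max_{1\le i\le d}\max(x_i-w_i,\,w_i-x_i)$, which reduces the task to two ingredients: computing the $d$ absolute values $|x_i-w_i|$ in a single ReLU layer, and then taking the maximum of $d$ nonnegative numbers via a balanced binary tree of pairwise maxima.

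First I would record the two elementary gadgets. For the absolute value, $|t|=\mathrm{ReLU}(t)+\mathrm{ReLU}(-t)$, so one hidden layer consisting of the $2d$ units $\mathrm{ReLU}(x_i-w_i)$ and $\mathrm{ReLU}(w_i-x_i)$, $i=1,\dots,d$, suffices, and each $|x_i-w_i|$ is thereafter a fixed linear combination of these units. For the pairwise maximum I would use $\max(s,t)=\mathrm{ReLU}(s-t)+\mathrm{ReLU}(t)$, which holds whenever $t\ge 0$; since every intermediate value occurring in the construction (an absolute value, or a max of nonnegative values) is nonnegative, this invariant is preserved throughout. Thus each pairwise max is realized by two ReLU units whose pre-activations $s-t$ and $t$ are linear combinations of the previous layer's activations, and whose output $\max(s,t)$ is again a linear combination of those two units, so it can feed the next tree level.

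Next I would assemble the network: layer $1$ produces the $2d$ units above; then a balanced binary tree of depth $\lceil\log_2 d\rceil$ computes $\max_i|x_i-w_i|$, where level $k$ of the tree holds about $d/2^{k}$ pairwise-max gates, each using $O(1)$ ReLU units, giving per-level width $O(d/2^{k})=O(d)$ and total depth $1+\lceil\log_2 d\rceil=O(\log d)$. Tree nodes arising when $d$ is not a power of two simply forward a value ($\mathrm{ReLU}$ of a nonnegative number is the identity), and any layer can be padded to a uniform width $O(d)$ with identity units $\mathrm{ReLU}(\cdot)-\mathrm{ReLU}(-\cdot)$ to respect the fully-connected format. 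The final affine output layer reads off the single surviving value, which equals $\|\vx-\vw\|_\infty$ exactly; combined with Lemma \ref{lem:vc1} this will yield the VC bound needed in the proof of Theorem \ref{thm:rob}.

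The argument is essentially routine; the only points needing care are checking that $\max(s,t)=\mathrm{ReLU}(s-t)+\mathrm{ReLU}(t)$ is valid under the nonnegativity invariant and that this invariant holds at every node of the tree, and the bookkeeping that keeps each layer's width $O(d)$ and the total depth $O(\log d)$ while allowing only linear combinations between consecutive layers. I expect this width/depth accounting across the tree levels — not any conceptual difficulty — to be the main place where one must be precise.
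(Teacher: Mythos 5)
Your construction is correct and is essentially the same as the paper's: both rewrite $\|\vx-\vw\|_\infty$ as a maximum of the $2d$ terms $x_i-w_i$ and $w_i-x_i$ and then reduce it by a binary tree of pairwise maxima realized with ReLU gadgets, giving width $O(d)$ and depth $O(\log d)$. The only (immaterial) difference is the gadget: the paper uses the unconditional identity $\max\{s,t\}=\mathrm{ReLU}(s-t)+\mathrm{ReLU}(t)-\mathrm{ReLU}(-t)$ with three units per node, while you use the two-unit version $\mathrm{ReLU}(s-t)+\mathrm{ReLU}(t)$ justified by the nonnegativity invariant, which you correctly verify.
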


\begin{proof}[Proof of Lemma \ref{lem:vc}]
The proof is by construction. Rewrite $\|\vx-\vw\|_{\infty}$ as $\max\{x_1-w_1,w_1-x_1,...,x_d-w_d,w_d-x_d\}$ which is a maximum of $2d$ items. Notice that $\max\{x,y\}=\operatorname{ReLU}(x-y)+\operatorname{ReLU}(y)-\operatorname{ReLU}(-y)$, therefore we can use $3d$ neurons in the first hidden layer so that the input to the second hidden layer are $\max\{x_i-w_i,w_i-x_i\}$, in all $d$ items. Repeating this procedure which cuts the number of items within maximum by half, within $O(\log d)$ hidden layers this network finally outputs $\|\vx-\vw\|_{\infty}$ as desired.
\end{proof}

The VC bound of $\ell_{\infty}$-dist Net is formalized by the following lemma:

\begin{lemma}\label{lem:vc2}
Consider an $\ell_{\infty}$-dist net architecture $F$ with input dimension $d$, width $w\ge d$ and depth (number of hidden layers) $L$, then its VC dimension satisfies:
\begin{equation}\label{eq:vc2}
VCdim(F)=\tilde{O}(L^2 w^4)   
\end{equation}
\end{lemma}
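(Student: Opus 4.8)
The plan is to reduce an $\ell_\infty$-dist net to a fully-connected ReLU network that computes exactly the same function, and then invoke the VC bound for ReLU networks (Lemma \ref{lem:vc1}). Concretely, given an $\ell_\infty$-dist net architecture $F$ with input dimension $d$, width $w \ge d$ and depth $L$, I would construct a ReLU architecture $\hat F$ such that for every choice of weights and biases of $F$ there is a choice of weights for $\hat F$ reproducing the same scalar output map $g(\vx)$ (hence the same sign classifier). Once this simulation is in place, the claimed bound Eqn.~\ref{eq:vc2} follows by plugging the size of $\hat F$ into Eqn.~\ref{eq:vc1}.

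The building block is Lemma \ref{lem:vc}: a single $\ell_\infty$-dist neuron $\vx^{(l-1)}\mapsto \|\vx^{(l-1)}-\vw^{(l,k)}\|_\infty+b^{(l,k)}$ is realized by a ReLU sub-network of width $O(w)$ and depth $O(\log w)$, with the additive bias $b^{(l,k)}$ and the final output negation $\vg=-\vx^{(L)}$ absorbed into an output linear map. To simulate one full $\ell_\infty$-dist layer $\vx^{(l-1)}\in\R^{d_{l-1}}\mapsto\vx^{(l)}\in\R^{d_l}$ with $d_{l-1},d_l\le w$, I would run these $d_l$ sub-networks in parallel on copies of the common input $\vx^{(l-1)}$. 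Since the max-gadget $\max\{a,b\}=\mathrm{ReLU}(a-b)+\mathrm{ReLU}(b)-\mathrm{ReLU}(-b)$ used in Lemma \ref{lem:vc} already ingests the raw coordinates, the replication of $\vx^{(l-1)}$ across the $d_l$ blocks is folded into the first linear map of the block. This yields a ReLU block of width $O(d_l d_{l-1})=O(w^2)$ and depth $O(\log w)$ realizing layer $l$ exactly. Stacking the $L$ such blocks produces $\hat F$ of width $W'=O(w^2)$ and depth $L'=O(L\log w)$, and its input dimension $d\le w\le W'$, so the hypothesis $w'\ge d$ of Lemma \ref{lem:vc1} holds.

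Finally, since every function realized by $F$ is realized by $\hat F$, the class of $\mathrm{sign}$-classifiers induced by $F$ is contained in that induced by $\hat F$, so $VCdim(F)\le VCdim(\hat F)$. Applying Lemma \ref{lem:vc1} to $\hat F$ gives $VCdim(\hat F)=\tilde O\rbr{(L')^2 (W')^2}=\tilde O\rbr{(L\log w)^2(w^2)^2}=\tilde O(L^2 w^4)$, the $\log^2 w$ factor being absorbed into $\tilde O$; this is precisely Eqn.~\ref{eq:vc2}.

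The step that requires the most care is the parallel simulation of a single layer within the advertised width $O(w^2)$: one must check that the $d_l$ distance sub-networks can be laid side by side so that each still receives the full vector $\vx^{(l-1)}$, and that bias terms, the output negation, and the (possibly negative) intermediate activations are all expressible within the ReLU primitive. I expect this to be routine bookkeeping rather than a genuine obstacle, since the real content is already packaged in Lemmas \ref{lem:vc1} and \ref{lem:vc}; the only quantitative point to verify is that the width squares ($w\mapsto w^2$) while the depth grows by only a $\log w$ factor, which is exactly what gives the $w^4$ (versus $w^2$) in Eqn.~\ref{eq:vc2}.
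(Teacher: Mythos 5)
Your proposal is correct and follows essentially the same route as the paper: simulate each $\ell_\infty$-dist neuron by the ReLU gadget of Lemma \ref{lem:vc}, lay the gadgets in parallel to get a ReLU architecture of width $O(w^2)$ and depth $O(L\log w)$ containing the function class of $F$, and conclude by monotonicity of VC dimension together with Lemma \ref{lem:vc1}. Your extra bookkeeping about replicating the input across blocks and absorbing biases and the output negation only makes explicit what the paper leaves implicit.
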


\begin{proof}[Proof of Lemma \ref{lem:vc2}]
By Lemma \ref{lem:vc}, each neuron in the $\ell_{\infty}$-dist net can be replaced by a fully-connected ReLU subnetwork with width $O(w)$ and depth $O(\log w)$. Therefore a fully-connected ReLU network architecture $G$ with width $O(w^2)$ and depth $O(L \log w)$ can realize any function represented by the $\ell_{\infty}$-dist net when parameters vary. Remember that VC dimension is monotone under the ordering of set inclusion, we conclude that such $\ell_{\infty}$-dist Net architecture $F$ has VC dimension no more than that of $G$ which equals $\tilde{O}(L^2 w^4)$ by lemma \ref{lem:vc1}.
\end{proof}

Finally, Theorem \ref{thm:rob} is a direct consequence by combing Lemmas \ref{lem:rader}, \ref{lem:robust} and \ref{lem:vc2}.

\begin{remark}
Though there exist generalization bounds for general Lipschitz model class \citep{von2004distance}, the dependence on $n$ scales as $n^{-1/d}$ which suffers from the curse of dimensionality \citep{neyshabur2017exploring}. Theorem \ref{thm:rob} is dimension-free instead.
\end{remark}

\section{Interval Bound Propagation}
\label{sec_ibp}
We now give a brief description of IBP \citep{mirman2018differentiable,gowal2018effectiveness}, a simple convex relaxation method to calculate the certified radius for general neural networks. The basic idea of IBP is to compute the lower bound and upper bound for each neuron layer by layer when the input $\vx$ is perturbed. 

\paragraph{Input layer.} Let the perturbation set be an $\ell_\infty$ ball with radius $\epsilon$. Then for the input layer, calculating the bound is trivial: when $x$ is perturbed, the value in the $i$-th dimension is bounded by the interval $[x_i-\epsilon, x_i+\epsilon]$.

\paragraph{Bound propagation.} Assume the interval bound of layer $l$ has already been obtained. We denote the lower bound and upper bound of layer $l$ be $\underline{\vx}^{(l)}$ and $\overline{\vx}^{(l)}$ respectively. We mainly deal with two cases:
\begin{itemize}
\item ${\vx}^{(l)}$ is followed by a linear transformation, either a linear layer or a convolution layer. Denote $\vx^{(l+1)}=\mathbf W^{(l+1)} \vx^{(l)}+\vb^{(l+1)}$ be the linear transformation. Through some straightforward calculations, we have
\begin{equation}
\label{ibp1}
\begin{aligned}
    \underline{\vx}^{(l+1)}&=\mathbf \mu^{(l+1)}-\vr^{(l+1)}\\
    \overline{\vx}^{(l+1)}&=\mathbf \mu^{(l+1)}+\vr^{(l+1)}
\end{aligned}
\end{equation}
where
\begin{equation}
\label{ibp2}
\begin{aligned}
\mathbf \mu^{(l+1)}&=\frac 1 2 \mathbf W^{(l+1)} (\underline{\vx}^{(l)}+\overline{\vx}^{(l)})\\
\vr^{(l+1)}&=\frac 1 2 \left|\mathbf W^{(l+1)}\right| (\overline{\vx}^{(l)}-\underline{\vx}^{(l)})
\end{aligned}
\end{equation}
Here $|\cdot|$ is the element-wise absolute value operator.
\item ${\vx}^{(l)}$ is followed by a monotonic element-wise activation function, e.g. ReLU or sigmoid. Denote $\vx^{(l+1)}=\sigma(\vx^{(l)})$. Then it is straightforward to see that $\underline{\vx}^{(l+1)}=\sigma(\underline{\vx}^{(l)})$ and $\overline{\vx}^{(l+1)}=\sigma(\overline{\vx}^{(l)})$.
\end{itemize}
Using the above recurrence formulas, we can calculate the interval bound of all layers $\vx^{(l)}$.

\paragraph{Margin calculation.} Finally we need to calculate the margin vector $\vm$, the $j$th element of which is defined as the difference between neuron $x_j^{(L)}$ and $x_{y}^{(L)}$ where $y$ is the target class number, i.e. $m_j=x_j^{(L)}-x_{y}^{(L)}$. Note that $m_y=0$. It directly follows that if all elements of $\vm$ is negative (except for $m_y=0$) for any perturbed input, then we can guarantee robustness for data $\vx$. Therefore we need to get an upper bound of $\vm$, denoted as $\overline{\vm}$.

One simple way to calculate $\overline{\vm}$ is by using interval bound of the final layer to obtain $\overline m_j=\overline{x}_j^{(L)}-\underline{x}_y^{(L)}$. However, we can get a tighter bound if the final layer is a linear transformation, which is typically the case in applications. To derive a tighter bound, we first write the definition of $\vm$ into a matrix form: $$\vm=\vx^{(L)}-\mathbf 1 \mathbf e_y^T \vx^{(L)}=(\mathbf I-\mathbf 1 \mathbf e_y^T)\vx^{(L)}$$
where $\ve_y$ is a vector with all-zero elements except for the $y$th element being one, $\mathbf 1$ is the all-one vector, and $\mathbf I$ is the identity matrix. Note that $\vm$ is a linear transformation of $\vx^{(L)}$, therefore we can merge this transformation with final layer and obtain
\begin{equation}
\label{eq:ibp_margin}
\begin{aligned}
    \vm&=(\mathbf I-\mathbf 1 \mathbf e_y^T)(\mathbf W^{(L)}\vx^{(L-1)}+\vb^{(L)})\\
    &=(\mathbf W^{(L)}-\mathbf 1 \mathbf e_y^T\mathbf W^{(L)})\vx^{(L-1)}+(\vb^{(L)}-\mathbf 1 \mathbf e_y^T\vb^{(L)})\\
    &=\widetilde{\mathbf W^{(L)}}\vx^{(L-1)}+\widetilde{\vb^{(L)}}
\end{aligned}
\end{equation}
Using the same bounding technique in Eqn.~\ref{ibp1},\ref{ibp2}, we can calculate $\overline{\vm}$.

\paragraph{Loss Design.} Finally, we can optimize a loss function based on $\overline{\vm}$. We adopt the same loss function in \citet{gowal2018effectiveness,zhang2020towards} who use the combination of the natural loss and the worst perturbation loss. The loss function can be written as
\begin{align}\label{eq:cross_entropy}
l(\vg,\gT)=\frac{1}{n}\sum_{i=1}^n \kappa l_{\text{CE}}(\vg(\vx_i),y_i)+(1-\kappa) l_{\text{CE}}(\overline{\vm}_i,y_i)
\end{align}
where $l_{\text{CE}}$ denotes the cross-entropy loss, $\overline{\vm}_i$ is defined in Eqn.~\ref{eq:ibp_margin} which is calculated using convex relaxation, and $\kappa$ controls the balance between standard accuracy and robust accuracy.

As we can see, the calculation of $\overline{\vm}$ is differentiable with respect to network parameters. Therefore, any gradient-based optimizer can be used to optimize these parameters. The whole process of IBP is computationally efficient: it costs roughly two times for certification compared to normal inference. However, the bound provided by IBP is looser than other more sophisticated methods based on linear relaxation.

\section{Comparing $\ell_\infty$-dist Net with AdderNet}
\label{sec_addernet}
Recently, \citet{chen2020addernet} presents a novel form of networks called AdderNet, in which all convolutions are replaced with merely addition operations (calculating $\ell_1$-norm). Though the two networks seem to be similar at first glance, they are different indeed.

The motivation of the two networks is different. The motivation for AdderNet is to replace multiplication with additions to reduce the computational cost by using $\ell_1$-norm, while our purpose is to design robust neural networks that resist adversarial attacks by using $\ell_\infty$-norm.

The detailed implementations of the two networks are totally different. As shown in \cite{chen2020addernet}, using standard Batch Normalization is crucial to train the AdderNets successfully. However, $\ell_\infty$-dist Nets cannot adapt the standard batch normalization due to that it dramatically changes the Lipschitz constant of the network and hurt the robustness of the model. Furthermore, In AdderNet, the authors modified the back-propagation process and used a layer-wise adaptive learning rate to overcome optimization difficulties for training $\ell_1$-norm neurons. We provide a different training strategy using mean shift normalization, smoothed approximated gradients, identity-map initialization, and $\ell_p$-norm weight decay, specifically for dealing with $\ell_\infty$-norm.

Finally, the difference above leads to trained models with different properties. By using standard Batch Normalization, the AdderNet can be trained easily but is not robust even with respect to $\ell_1$-norm perturbation. In fact, even without Batch Normalization, we still cannot provide robust guarantee for AdderNet. Remark \ref{remark_lip} has shown that the Lipschitz property holds specifically for $\ell_\infty$-dist neurons rather than other $\ell_p$-dist neurons.

% All components in our $\ell_\infty$-dist Net preserve the Lipschitzness of the network, and the trained model achieves state-of-the-art robust test performance.

\section{Experimental Details}
\label{sec_experiment_details}
We use a single NVIDIA RTX-3090 GPU to run all these experiments. Each result of $\ell_\infty$-dist net in Table \ref{tbl_results} is reported using the medien of 8 runs with the same hyper-parameters. Details of network architectures are provided in Table \ref{tbl_architecture}. Details of hyper-parameters are provided in Table \ref{tbl_hyperparameter}. For $\ell_\infty$-dist Net, we use multi-class hinge loss with a threshold hyper-parameter $t$ which depends on the robustness level $\epsilon$. For $\ell_\infty$-dist Net+MLP, we use IBP loss with two hyper-parameters $\kappa$ and $\epsilon_{\text{train}}$ as in \citet{gowal2018effectiveness,zhang2020towards}. We use a linear warmup over $\epsilon_{\text{train}}$ in the first $e_1$ epoch while keeping $p=p_{\text{start}}$ fixed, increase $p$ from $p_{\text{start}}$ to $p_{\text{end}}$ in the next $e_2$ epochs while keeping $\epsilon_{\text{train}}$ fixed, and fix $p=\infty$ in the last $e_3$ epochs. Different from \citet{gowal2018effectiveness}, $\kappa$ is kept fixed throughout training since we do not find any training instability with the fixed $\kappa$.

For other methods, the results are typically borrowed from the original paper. For IBP and CROWN-IBP results on Fashion-MNIST dataset, we use the official github repo of CROWN-IBP and perform a grid search over hyper-parameters $\kappa$ and set $\epsilon_{\text{train}}=1.1\epsilon_{\text{test}}=0.11$. We use the largest model in their papers (denoted as DM-large) and use the learning rate and epoch schedule the same as in the MNIST dataset. We select the hyper-parameter $\kappa$ with the best certified accuracy.

In Table \ref{tab:speed}, the baseline results are run using the corresponding official github repos. For example, we measure the speed for IBP and CROWN-IBP using the github repo of \citet{zhang2020towards}, and measure the speed for CROWN-IBP with loss fusion using the github repo of \citet{xu2020automatic}.

\section{Experiments for Identity-map Initialization}
\label{sec_initialization}

We conduct experiments to see the the problem of Gaussian initialization for training deep models. The results are shown in Table \ref{tbl_identity}, where we train $\ell_\infty$-dist Nets with different number of layers on CIFAR-10 dataset, using the same hyper-parameters provided in Table \ref{tbl_hyperparameter}. It is clear from the table that the \emph{training accuracy} begins to drop when the model goes deeper. After applying identity-map initialization, the training accuracy does not drop for deep models.

\begin{table*}[]
\caption{Details of network architectures. Here ``Norm'' denotes mean shift normalization in Section \ref{sec_bn}.}
\small
\vspace{2pt}
\label{tbl_architecture}
\begin{center}
\begin{tabular}{ccccc}
\hline
       & $\ell_\infty$-dist Net (MNIST)         & $\ell_\infty$-dist Net+MLP (MNIST)       & $\ell_\infty$-dist Net (CIFAR-10)      & $\ell_\infty$-dist Net+MLP (CIFAR-10)    \\ \hline
Layer1 & $\ell_\infty$-dist(5120)+Norm & $\ell_\infty$-dist(5120)+Norm & $\ell_\infty$-dist(5120)+Norm & $\ell_\infty$-dist(5120)+Norm \\
Layer2 & $\ell_\infty$-dist(5120)+Norm & $\ell_\infty$-dist(5120)+Norm & $\ell_\infty$-dist(5120)+Norm & $\ell_\infty$-dist(5120)+Norm \\
Layer3 & $\ell_\infty$-dist(5120)+Norm & $\ell_\infty$-dist(5120)+Norm & $\ell_\infty$-dist(5120)+Norm & $\ell_\infty$-dist(5120)+Norm \\
Layer4 & $\ell_\infty$-dist(5120)+Norm & $\ell_\infty$-dist(5120)+Norm & $\ell_\infty$-dist(5120)+Norm & $\ell_\infty$-dist(5120)+Norm \\
Layer5 & $\ell_\infty$-dist(10)                 & FC(512)+Tanh                        & $\ell_\infty$-dist(5120)+Norm & $\ell_\infty$-dist(5120)+Norm \\
Layer6 &                                        & FC(10)                                 & $\ell_\infty$-dist(10)                 & FC(512)+Tanh                        \\
Layer7 &                                        &                                        &                                        & FC(10)                                 \\ \hline
\end{tabular}
\end{center}
\vspace{-10pt}
\end{table*}

\begin{table*}[]
\caption{Hyper-parameters to produce results in Table \ref{tbl_results}.}
\small
\vspace{2pt}
\label{tbl_hyperparameter}
\begin{center}
\begin{tabular}{c|ccccccc}
\hline
Dataset            & \multicolumn{2}{c|}{MNIST}                                             & \multicolumn{2}{c|}{FashionMNIST}                                      & \multicolumn{2}{c|}{CIFAR-10}   & TinyImageNet                   \\ \hline
Architecture       & $\ell_\infty$ Net & \multicolumn{1}{c|}{$\ell_\infty$ Net+MLP} & $\ell_\infty$ Net & \multicolumn{1}{c|}{$\ell_\infty$ Net+MLP} & $\ell_\infty$ Net & \multicolumn{1}{c|}{$\ell_\infty$ Net+MLP} & $\ell_\infty$ Net+MLP\\ \hline
Optimizer        & \multicolumn{7}{c}{Adam($\beta_1=0.9,\beta_2=0.99,\epsilon=10^{-10}$)}                                            \\
Batch size         & \multicolumn{7}{c}{512}                                                                                                                                                                             \\
Learning rate      & \multicolumn{7}{c}{0.02 (0.04 for TinyImageNet)}                                                                                                                                                                            \\
Weight decay       & \multicolumn{7}{c}{0.005}                                                                                                                                                                           \\
$p_{\text{start}}$        & \multicolumn{7}{c}{8}                                                                                                                                                                               \\
$p_{\text{end}}$          & \multicolumn{7}{c}{1000}                                                                                                                                                                            \\ \hline
$\epsilon_{\text{train}}$ & -                      & \multicolumn{1}{c|}{0.35}                     & -                      & \multicolumn{1}{c|}{0.11}                     & -                      & \multicolumn{1}{c|}{8.8/255}                  & 0.005\\
$\kappa$           & -                      & \multicolumn{1}{c|}{0.5}                     & -                      & \multicolumn{1}{c|}{0.5}                     & -                      & \multicolumn{1}{c|}{0} &0                        \\
$t$                & 0.9                    & \multicolumn{1}{c|}{-}                        & 0.45                   & \multicolumn{1}{c|}{-}                        & 80/255                 & \multicolumn{1}{c|}{-}                      &-  \\
$e_1$              & 50 & \multicolumn{1}{c|}{50}                                                & 50 & \multicolumn{1}{c|}{50}                                                & 100 & \multicolumn{1}{c|}{100}        &100                   \\
$e_2$              & 300 & \multicolumn{1}{c|}{300}                                               & 300 & \multicolumn{1}{c|}{300}                                               & 650 & \multicolumn{1}{c|}{650}  &350                         \\
$e_3$              & 50 & \multicolumn{1}{c|}{50}                                                & 50 & \multicolumn{1}{c|}{50}                                                & 50 & \multicolumn{1}{c|}{50}    &50                       \\
Total epochs       & 400 & \multicolumn{1}{c|}{400}                                               & 400 & \multicolumn{1}{c|}{400}                                               & 800 & \multicolumn{1}{c|}{800}  &500                         \\ \hline
\end{tabular}
\end{center}
\end{table*}

\begin{table*}[]
\caption{Performance of $\ell_\infty$-dist Net trained using different initialization methods on CIFAR-10 dataset.}
\small
\vspace{2pt}
\label{tbl_identity}
\begin{center}
\begin{tabular}{c|cc|cc}
\hline
\multirow{2}{*}{\#Layers} & \multicolumn{2}{c|}{Gaussian Initialization} & \multicolumn{2}{c}{Identity Map Initialization} \\ \cline{2-5} 
                          & Train                 & Test                 & Train                  & Test                   \\ \hline
5                         & 63.01                 & 55.76                & 65.93                  & 56.56                  \\
6                         & 63.46                 & 55.85                & 65.51                  & 56.80                  \\
7                         & 63.82                 & 56.77                & 66.58                  & 57.06                  \\
8                         & 63.46                 & 56.72                & 67.11                  & 56.64                  \\
9                         & 61.57                 & 55.94                & 67.52                  & 57.01                  \\
10                        & 58.72                 & 55.04                & 68.84                  & 57.18                  \\ \hline
\end{tabular}
\end{center}
\end{table*}

\section{Preliminary Results for Convolutional $\ell_\infty$-dist Nets}
\label{sec_conv}
$\ell_\infty$-dist neuron can be easily used in convolutional neural networks, and the Lipschitz property still holds. We also conduct experiments using convolutional $\ell_\infty$-dist net on the CIFAR-10 dataset. We train a a eight-layer convolutional $\ell_\infty$-dist Net+MLP. The detailed architecture is given in Table \ref{tbl_conv_architecture}. The training configurations and hyper-parameters are exactly the same as training fully-connected $\ell_\infty$-dist nets. Results are shown in Table \ref{tbl_results_conv}. From Table \ref{tbl_results_conv}, we can see that convolutional $\ell_\infty$-dist nets still reach good certified accuracy and outperforms all existing methods on the CIFAR-10 dataset.

% However, we empirically find that training a convolutional $\ell_\infty$-dist net suffers from more severe optimization difficulties, and up to now we can not obtain results comparable to fully connected $\ell_\infty$-dist net.

\begin{table*}[]
\caption{Details of convolutional network architectures. Here ``Norm'' denotes mean shift normalization in Section \ref{sec_bn}.}
\small
\vspace{5pt}
\label{tbl_conv_architecture}
\begin{center}

\begin{tabular}{cc}
\hline
             & Convolutional $\ell_\infty$-dist Net+MLP (CIFAR-10)    \\ \hline
Layer1 &  $\ell_\infty$-dist-conv(3, 128, kernel=3)+Norm \\
Layer2 &  $\ell_\infty$-dist-conv(128, 128, kernel=3)+Norm \\
Layer3 & $\ell_\infty$-dist-conv(128, 256, kernel=3, stride=2)+Norm \\
Layer4 & $\ell_\infty$-dist-conv(256, 256, kernel=3)+Norm \\
Layer5 &  $\ell_\infty$-dist-conv(256, 256, kernel=3)+Norm \\
Layer6 &  $\ell_\infty$-dist(512)+Norm \\
Layer7 &  FC(512)+Tanh \\
Layer8 & FC(10) \\
\hline
\end{tabular}
\end{center}
\end{table*}

\begin{table*}[]
\caption{Performance of convolutional $\ell_\infty$-dist nets and existing methods.}
\small
\vspace{2pt}
\label{tbl_results_conv}
\begin{center}
\begin{tabular}{c|cr|rrr}
\hline
Dataset                                                                                     & Method           & FLOPs & Test          & Robust        & Certified \\ \hline
\multirow{6}{*}{\begin{tabular}[c]{@{}c@{}}CIFAR-10\\ ($\epsilon=8/255$)\end{tabular}}      & PVT                        &2.4M                      & 48.64                             & 32.72                             & 26.67                                  \\
                                                                                            & DiffAI                      &96.3M                     & 40.2                              & -                                 & 23.2                                                                            \\
                                                                                            & COLT                       &6.9M                      & \textbf{51.7}                              & -                                 & 27.5                                                      \\
                                                                                            & IBP                        &151M                      & 50.99                             & 31.27                             & 29.19                                    \\
                                                                                            & CROWN-IBP                  &151M                      & 45.98                             & 34.58                             & 33.06                                                                             \\
                                                                                            &  Conv $\ell_\infty$-dist Net+MLP                      &566M                                    & 49.17                             & \textbf{37.23}                             & \textbf{34.30}                                         \\
                                                                                             \hline
\end{tabular}
\end{center}
\vspace{-15pt}
\end{table*}

\end{document}